\documentclass{article}
\usepackage{amsmath,amsfonts,mathrsfs,bm, amsthm}

\newcommand{\bLambda}{\bm{\Lambda}}
\newcommand{\bGamma}{\bm{\Gamma}}

\newcommand{\bOmega}{\bm{\Omega}}

\newcommand{\bTheta}{\bm{\Theta}}

\newcommand{\balpha}{\bm{\alpha}}

\newcommand{\bgamma}{\bm{\gamma}}

\newcommand{\blambda}{\bm{\lambda}}

\newcommand{\tltheta}{\tilde{\theta}}

\newcommand{\tlomega}{\tilde{\omega}}

\newcommand{\bE}{\mathbf{E}}

\newcommand{\bG}{\mathbf{G}}
\newcommand{\bH}{\mathbf{H}}
\newcommand{\bI}{\mathbf{I}}

\newcommand{\bM}{\mathbf{M}}

\newcommand{\bP}{\mathbf{P}}
\newcommand{\bQ}{\mathbf{Q}}

\newcommand{\bU}{\mathbf{U}}
\newcommand{\bV}{\mathbf{V}}
\newcommand{\bW}{\mathbf{W}}
\newcommand{\bX}{\mathbf{X}}

\newcommand{\ba}{\mathbf{a}}

\newcommand{\bg}{\mathbf{g}}
\newcommand{\bh}{\mathbf{h}}

\newcommand{\by}{\mathbf{y}}

\newcommand{\bbE}{\mathbb{E}}
\newcommand{\bbR}{\mathbb{R}}

\newcommand{\calB}{\mathcal{B}}

\newcommand{\calD}{\mathcal{D}}

\newcommand{\calH}{\mathcal{H}}

\newcommand{\calJ}{\mathcal{J}}

\newcommand{\calM}{\mathcal{M}}

\newcommand{\calY}{\mathcal{Y}}
\newcommand{\calZ}{\mathcal{Z}}

\newcommand{\tlg}{\tilde{g}}
\newcommand{\tlh}{\tilde{h}}

\newcommand{\barb}{\bar{b}}

\newcommand{\barA}{\bar{A}}

\newcommand{\barV}{\bar{V}}

\newcommand{\tlbG}{\tilde{\bG}}

\newcommand{\tlbX}{\tilde{\bX}}

\newcommand{\tlbg}{\tilde{\bg}}
\newcommand{\tlbh}{\tilde{\bh}}

\newcommand{\td}{{\text{d}}}

\newcommand{\tu}{{\text{u}}}

\newcommand{\rmp}{\mathrm{p}}
\newcommand{\rmq}{\mathrm{q}}

\newcommand{\rmu}{\mathrm{u}}

\newcommand{\tU}{\text{U}}

\newcommand{\bone}{\mathbf{1}}

\newcommand{\suml}{\sum\limits}

\newcommand{\supl}{\sup\limits}
\newcommand{\liml}{\lim\limits}
\newcommand{\intl}{\int\limits}

\newcommand{\bigcapl}{\bigcap\limits}

\newcommand{\tr}{\text{Tr}}

\newcommand{\cov}{\text{Cov}}
\newcommand{\tth}{\text{th}}

\newcommand{\nwl}{\nonumber\\}

\newtheorem{theorem}{Theorem}
\newtheorem{lemma}{Lemma}
\newtheorem{prop}{Proposition}
\newtheorem{defi}{Definition}
\newtheorem{assum}{Assumption}
\newtheorem{cor}{Corollary}

\usepackage{a4wide}
\usepackage{algorithm,algorithmic}
\usepackage{nicefrac, enumitem}
\usepackage{hyperref}
\usepackage{array, xcolor, lipsum, bibentry,fancyhdr}
\usepackage{graphicx,caption, subcaption}

\definecolor{lightgray}{gray}{0.8}
\newcolumntype{L}{>{\raggedleft}p{0.14\textwidth}}
\newcolumntype{R}{p{0.8\textwidth}}

\title{A Gaussian Comparison Theorem for Training Dynamics in Machine Learning}
\usepackage{times, braket}
\author{Ashkan~Panahi\footnote{Email: \url{ashkan.panahi@chalmers.se}}\\ Department of Computer Science and Engineering, Chalmers University,\\ Gothenburg, Sweden}
\date{}

\begin{document}
\maketitle
\begin{abstract}%
We study training algorithms with data following a Gaussian mixture model. For a specific family of such algorithms,  we present a non-asymptotic result, connecting the evolution of the model to a surrogate dynamical system, which can be easier to analyze. The proof of our result is based on the celebrated Gordon comparison theorem. Using our theorem, we rigorously prove the validity of the dynamic mean-field (DMF) expressions in the asymptotic scenarios. Moreover, we  suggest an iterative refinement scheme to obtain more accurate expressions in non-asymptotic scenarios. We specialize our theory to the analysis of training a perceptron model with a generic first-order (full-batch) algorithm and demonstrate that  fluctuation parameters in a non-asymptotic domain emerge in addition to the DMF kernels. 
\end{abstract}

\section{Introduction}
A central problem in the theory of machine learning (ML) is to characterize the dynamics of training, i.e. to understand how the statistical properties of models evolve as training progresses. Solving this problem is proven to be a key to understanding the immense power of generalization in modern ML, such as in large language models and vision transformers \cite{zhang2021understanding, pruthi2020estimating,tirumala2022memorization}. However, the nonlinear structure of these models makes the dynamics complex and  difficult to investigate, limiting the existing studies to significantly simpler setups. A successful remedy, in the recent years, has been to focus on asymptotically large models and dataset, where training dynamics often exhibit remarkably regular behavior despite the complexity of the training landscape \cite{mei2018mean}. Relying on the measure concentration phenomena, various high-dimensional theories predict a limit behavior, merely depending on the deterministic evolution of a small number of "order parameters", averaging the contribution of individual model parameters \cite{ben2022high, gerbelot2024rigorous}.  However, these theories are often specific to infinitely large setups and cannot be generalized to finite scenarios, where intricate fluctuations in the training dynamics may occur, due to the dependencies between the model parameters and the data. In many cases, convergence to the limit behavior is also mathematically unproven. 

 In this paper,  we respond to the above shortcomings by presenting a novel analysis of a wide family of training algorithms over datasets with a Gaussian mixture distribution (see Section \ref{sec:problem}). This approach has strong implications in finite dimensions and also leads to mathematically provable characterization of the limit behavior. Our method is linked to Gordon's comparison theorem \cite{gordon2006milman}, underlying the celebrated convex Gaussian min-max theorem (CGMT) used in the analysis of convex ML models \cite{thrampoulidis2014gaussian,thrampoulidis2018precise}. Unlike CGMT, we employ this result to study the dynamics of training, addressing even  nonconvex optimization problems. Specifically, we offer the following contributions:
 \begin{enumerate}[wide, labelwidth=!, labelindent=0pt]
     \item We present a non-asymptotic theorem (Theorem \ref{theorem:main}) establishing a correspondence between two groups of stochastic dynamical systems, where corresponding instances have identical probability distributions. One of these systems, called the original dynamics, is a perturbation of the desired training procedure, while the other one, called the alternative dynamics is simpler to analyze, especially in the asymptotic scenarios, and can be used as a surrogate to investigate the dynamics of training.
     \item We prove our comparison theorem by framing the underlying stochastic dynamical systems as zeros of abstract Gaussian processes. Then, we prove an extension of Gordon's comparison theorem for analyzing such zeros, leading to the desired result. In Appendix \ref{sec:extension}, we present a more general version of our result for the analysis of the zeros of generic Gaussian processes, which do not necessarily correspond to a dynamical system. 
     \item We specialize our comparison theorem to an asymptotic scenario, where the model size $n$ and the number of samples $m$ grow to infinity, and the perturbation terms in the original dynamics vanish. In this scenario, we calculate the limit of the alternative process, which recovers the classical asymptotic expressions using the dynamic cavity method and dynamic mean field (DMF) theory. In this way, we rigorously prove the validity of DMF expressions in the scenarios of interest (Theorem \ref{thm:DMF}).
     \item We further discuss how the perturbations in the original dynamics may be eliminated even in finite dimensions. We formulate our discussion as a claim (Claim \ref{thm:claim}), the proof of which is differed to a future study. Under this claim, we present an iterative process, in the form of a fixed-point iteration, refining the asymptotic DMF expressions in the finite-dimensional cases (Algorithm \ref{alg:iterative}).

     \item We showcase our analysis in a scenario, involving training a perceptron model with a generic activation function. While previous studies typically consider a regression setup with a (perturbed) linear teacher model, we consider a classification scenario aligned with the mixture model of the data. Moreover, we consider a generic linear, first order (full-batch) optimization algorithm, encompassing  various familiar methods such as momentum and acceleration, beyond the standard gradient descent algorithm. We calculate the asymptotic DMF dynamics, which depends on a single-dimensional \emph{characteristic dynamics}, and a refinement, illustrating the emergence of fluctuation factors in finite dimensions leading to correction terms in the dynamics. 
 \end{enumerate} 

\section{Problem formulation}\label{sec:problem}

\subsection{Data Model}
Consider a dataset $\calD=\{(x_i,y_i)\}_{i\in [m]}$, consisting of statistically independent pairs $(x_i,y_i)$ of samples $x_{i}\in\bbR^n$ and corresponding class labels $y_i$ belonging to a finite set $\calY$ with cardinality $d$. We assume that these samples follow a \emph{Gaussian mixture model} where the samples $x_i$ are Gaussian when conditioned on a latent variable $\zeta_i$ belonging to a finite set $\calZ$ with cardinality $d_z$, and there is a deterministic map $\zeta_i\to y_i=y(\zeta_i)$. This model allows multiple Gaussian components to be associated with a class $y\in \calY$.  Accordingly, we define
\begin{eqnarray}
    \hat x(\zeta):=\bbE\left[x_{i}\mid \zeta_i=\zeta\right],\quad  R(\zeta):=\cov\left[x_{i}\mid \zeta_i=\zeta\right].
\end{eqnarray}

\subsection{Training Algorithms}
We consider a generic ML problem of fitting to the dataset $\calD$, a model represented by a matrix $\theta\in\bbR^{n\times J}$ with $J$ sub-models $\theta_j$ for $j\in [J]$ as columns. For this task, we focus on a family of sequential algorithms that calculate, at each step $l\in [L]$, a \emph{query pair} $(\theta(l),\omega(l))
$, consisting of a model estimate $\theta(l)\in\bbR^{n\times J}$ with $J$ sub-models $\theta(j,l)$ for $j\in [J]$ as columns, and a corresponding dual matrix $\omega(l)
\in\bbR^{m\times J}$. Within this family of algorithms, the query pair is utilized to calculate a \emph{response pair} $\left(q(l),p(l)
\right)$ with columns $q(j,l), p(j,l)$ for $j\in [J]$, respectively of the following form:
\begin{eqnarray}\label{eq:p_q}
    q(l)=\frac 1{m}\bX\omega(l)
    ,\quad p(l)=\bX^T\theta(l),
\end{eqnarray}
where $\bX$ is a matrix with $\{x_i\}$ as its columns.  We denote by $\hat\bX, \tlbX$ matrices with $\hat x(\zeta_i), x_{i}-\hat x(\zeta_i)$ as their columns, respectively.
In return, each query pair is calculated by the previously observed response pairs using the following differentiable maps for $l\in [L]$:
\begin{eqnarray}\label{eq:theta_omega}
    \theta(l)=\theta_l\left(q^{l-1}, p^{l-1}; \by\right)
    ,\quad  
    \omega(l)=\omega_l\left(p^l, q^{l-1}; \by\right),
\end{eqnarray}
where $\by:=(y_i)$ denotes the vector of all labels and $q^l:=\left\{q(l')\right\}_{l'=0}^{l},\ p^l:=\left\{p(l')\right\}_{l'=0}^{l-1}$ are the collection of response pairs observed up to (and including) time $l$. Note that the initialization $\theta_0$ may only depend on $\by$ and the relations in \eqref{eq:theta_omega} and \eqref{eq:p_q} lead to a sequential procedure with the order $\ldots\to\theta(l)\to p(l)\to \omega(l)\to q(l)\to \theta(l+1)\to\ldots$. For simplicity, we introduce $\bQ,\bP$ as row block matrices with $q(l),p(l)$ in their $l^\tth$ block, respectively.
 Similarly, we denote by $\bTheta=\bTheta(\bP,\bQ),\bOmega=\bOmega(\bP,\bQ)$ row block matrices with $\theta(l),\omega(l)$ in their $l^\tth$ block, respectively. Then, \eqref{eq:p_q} can be written as:

\begin{eqnarray}\label{eq:fixed_original}
    \underbrace{\left[\begin{array}{c}
         \bQ  \\
         \bP 
    \end{array}\right]}_{\xi}=\left[\begin{array}{c}
         \frac 1{m}\bX\bOmega  \\
         \bX^T\bTheta 
    \end{array}\right]\to\underbrace{\left[\begin{array}{c}
         \frac 1{m}\tlbX\bOmega  \\
         \tlbX^T\bTheta 
    \end{array}\right]}_{\phi(\xi)}+\underbrace{\left[\begin{array}{c}
         \frac 1{m}\hat\bX\bOmega-\bQ  \\
         \hat\bX^T\bTheta-\bP 
    \end{array}\right]}_{\rho_0(\xi)}=0,
\end{eqnarray}
which frames the dynamics as a zero of a vector (block matrix)-valued Gaussian process on the space of block matrices $\xi$. This interpretation is a key to our main result. We denote the space of the block matrices $\xi$ by $\calB$. Hence, we have $\phi,\rho_0:\calB\to\calB$. Our goal is to characterize the statistical properties of the solution of \eqref{eq:theta_omega} and \eqref{eq:p_q}, or equivalently the zero point of $\phi(\xi)+\rho_0(\xi)$ in \eqref{eq:fixed_original}, which we denote by $\xi_\phi$. 
\section{Main Result}\label{sec:main_result}
We show that the distribution of $\xi_\phi$ can be characterized by analyzing the zero point of an alternative process $\psi(\xi)+\rho_0(\xi)$, given bellow.
\subsection{Alternative Process}
Our alternative process 
$\psi:\calB\to\calB$ is defined based on  additional constants $\sigma\in\bbR_{>0}$ and $z\in\bbR$ and the definition of the following $J\times J$ overlap matrices:
\begin{eqnarray}
     V_{\theta}(l,l';\zeta):=\theta^T(l)R(\zeta)\theta(l')+\sigma^2\delta_{l,l'}\bI
    ,\quad  V_\omega(l,l';\zeta):=\frac 1m\suml_{i\mid \zeta_i=\zeta} \omega_{i}(l)\omega^T_{i}(l')+\sigma^2\delta_{l,l'}\bI,
\end{eqnarray}
where throughout the paper, 
$\bI$ denotes an identity matrix of a suitable size and 
$\omega_i(l)$ for $i\in [m]$ is the $i^\tth$ column of $\omega^T(l)$. Note that our definitions generally depend on the point $\xi$ and $\sigma,z$, but for simplicity, this dependency is neglected in the notation.  We denote by $V_\theta(y), V_\omega(y)$ the block matrices with $V_{\theta}(l,l';y), V_{\omega}(l,l';y)$ as their blocks, respectively. These matrices can also be written as
\begin{eqnarray}\label{eq: V_calculate}
    V_\theta(\zeta)=\bTheta^TR(y)\bTheta+\sigma^2\bI
    ,\quad V_\omega(\zeta)=\frac 1{m}\bOmega_\zeta^T\bOmega_\zeta+\sigma^2\bI,
\end{eqnarray}
where throughout the paper we use the notation $\bE_\zeta$ for any matrix $\bE$ to denote its sub-matrix corresponding to the  the rows $i$ with $\zeta_i=\zeta$ (in a predefined order).   Next, we define the upper triangular block matrices $A_\theta(\zeta)=\left(a_\theta(l',l;\zeta)\right)$ and $A_\omega=\left(a_\omega(l',l;\zeta)\right)$, where $a_\theta(l',l;\zeta), a_\omega(l',l;\zeta)$ are $J\times J$ blocks of $A_\theta(\zeta)$ and $A_\omega(\zeta)$, respectively. These matrices  correspond to a Cholesky-type decomposition of $V_\theta(\zeta), V_\omega(\zeta)$, respectively, which means that $a_\theta(\mu,l;\zeta)=a_\omega(\mu,l;\zeta)=0$ for $\mu>l$ and we have
\begin{eqnarray}
    V_\theta(l_1,l_2;\zeta)=\suml_{\mu}a^T_\theta(\mu,l_1;\zeta)a_\theta(\mu,l_2;\zeta),\quad V_\omega(l_1,l_2;\zeta)=\suml_{\mu}a^T_\omega(\mu,l_1;\zeta)a_\omega(\mu,l_2;\zeta).
\end{eqnarray}
Since for $\sigma>0$ the matrices $V_\theta(\zeta),V_\omega(\zeta)$ are positive-definite,  $A_\omega(\zeta),A_\theta(\zeta)$ are invertible and can be selected to be differentiable with respect to $\bTheta,\bOmega$. However, they are not unique and our results are invariant to their choice. For example, one might focus on a choice with symmetric positive definite diagonal blocks, providing a unique definition.  
 According to these definitions, we introduce the alternative process as follows:
\begin{eqnarray}\label{eq:alternative}
    \psi(\xi):=\left[\begin{array}{c}
           \suml_{\zeta\in \calZ}
           R^{\frac 1 2}(\zeta)\left(\frac{\bG_\zeta}{\sqrt{m}}\right)A_\omega(\zeta)
           +
           \suml_{\zeta\in\calZ}R(\zeta)\bTheta A^{-1}_\theta(\zeta)\left(\frac{\bH_\zeta^T\bOmega_\zeta}{m
           }+\frac{\sigma\bW_\zeta+z\bGamma_\zeta A_\omega(\zeta)}{\sqrt{m}}\right)_\tu \\
          \left[\bH_\zeta A_\theta(\zeta) +
          \bOmega_\zeta A^{-1}_\omega(\zeta)\left(\frac{\bG_\zeta^TR^{\frac 12}(\zeta)\bTheta+\sigma\bW_\zeta^T+z\bGamma_\zeta^TA_\theta(\zeta)}{\sqrt{m
          }}\right)_\tU\right]_{\zeta_i=\zeta}
    \end{array}\right],
\end{eqnarray}
 where $[\bE_\zeta]_{\zeta_i=\zeta}$ for any collection of matrices $\{\bE_\zeta\}_\zeta$, denotes a matrix with the sub-matrix $\bE_\zeta$ at the position of the rows $i$ with  $\zeta_i=\zeta$ (with the same order as in $\bOmega_\zeta$). For example, we see that $\bOmega=[\bOmega_\zeta]_{\zeta_i=\zeta}$. Moreover, for each $\zeta\in \calZ$, the matrices $\bG_\zeta,\bH_\zeta, \bW_\zeta$ and $\bGamma_\zeta$ are random with independent, standard Gaussian entries, where $\bG_\zeta, \bH_\zeta$ have the same dimensions as $\bTheta,\bOmega_\zeta$, respectively and $\bW_\zeta,\bGamma_\zeta$ are $LJ\times LJ$. These matrices can be treated as block matrices agreeing with the block structure of $A_\theta(y),A_\omega(y)$. 
 Finally, $(\ldotp)_\tu$ and $(\ldotp)_\tU$ denote the block
upper-triangular (including the diagonal blocks) and strictly 
block upper triangular 
(excluding the diagonal blocks) 
parts, respectively.
\subsection{Main Theorem}
Our main theorem does not directly address $\phi$, but a perturbation of it, given by
\begin{eqnarray}\label{eq:perturbabtion}
    &\phi'(\xi):=\phi(\xi)+\sigma\left[\begin{array}{c}
         \frac 1{\sqrt{m}}\suml_{\zeta\in\calZ}R^{\frac 1 2}(\zeta)\bU_\zeta  \\
         \bV 
    \end{array}\right]+\sqrt{\frac {1+z^2}{m
    }}\left[\begin{array}{cc}
    \suml_{\zeta\in\calZ}R(\zeta)\bTheta A_\theta^{-1}(\zeta)\left(\bGamma_\zeta A_\omega(\zeta)\right)_{\rmu}  \\
    \left[\bOmega_\zeta A_\omega^{-1}(\zeta)\left(\bGamma_\zeta^TA_{\theta}(\zeta)\right)_{\tU}\right]_{\zeta_i=\zeta}
    \end{array}\right],
\end{eqnarray}
where $\bU_\zeta$ for all $\zeta\in\calZ$ and $\bV$ consist of independent, standard Gaussian entries. Despite the complex structure of $\psi$ in \eqref{eq:alternative} and the additional terms in \eqref{eq:perturbabtion}, it is seen that the equations $\psi(\xi)+\rho_0(\xi)=0$ and $\phi'(\xi)+\rho_0(\xi)=0$ both correspond to sequential procedures with the same order $\ldots\to\theta(l)\to p(l)\to \omega(l)\to q(l)\to \theta(l+1)\to\ldots$ as the original problem $\phi(\xi)+\rho_0(\xi)=0$. Hence, these equations have unique solutions that we denoted by $\xi_\psi$ and $\xi'_\phi$, respectively. Then, we have the following result:
\begin{theorem}\label{theorem:main}
    For every $\sigma>0$ and $z\in\bbR$, the solutions $\xi_\psi$ and $\xi'_\phi$ have identical distributions. In other words, for any measurable function $h:\calB\to\bbR$, we have 
    \begin{eqnarray}
        \bbE[h(\xi_\psi)]=\bbE[h(\xi'_\phi)].
    \end{eqnarray}
\end{theorem}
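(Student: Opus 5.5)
The plan is to avoid comparing the two zero sets through Gordon-type inequalities. Instead I will show directly that, conditioned on the latent labels $(\zeta_i)$, both $\xi'_\phi$ and $\xi_\psi$ are produced by the same sequential Gaussian conditioning scheme. Since $\hat\bX$, $\by$ and $R(\zeta)$ are deterministic given $(\zeta_i)$, it suffices to prove equality in law of the triangular recursions generated by $\phi'+\rho_0=0$ and $\psi+\rho_0=0$. For simplicity I will first treat a single component $\zeta$ with $R=\bI$. I will then write $\tlbX_\zeta=R^{\frac12}(\zeta)\bG'_\zeta$ with $\bG'_\zeta$ standard Gaussian; the different $\zeta$ decouple because the blocks are independent, and the sums over $\zeta$ recombine at the end.

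\textbf{Step 1 (original side).} Define the filtration generated by $\theta(1),p(1),\omega(1),q(1),\ldots$. Given the past, the new response $\tlbX^T\theta(l)$ is Gaussian conditioned on the linear constraints $\tlbX^T\theta(l')=p(l')-\hat\bX^T\theta(l')$ and $\tlbX\omega(l')=m q(l')-\hat\bX\omega(l')$ for $l'<l$. This is the classical Bolthausen/AMP conditioning lemma. The conditional mean is a linear combination of the previously observed quantities, projected through $\Pi_{\bTheta}$ and $\Pi_{\bOmega}$ (the projections onto the spans of the previous iterates). The conditional fluctuation is $\Pi^\perp_{\bOmega}\,\bH\,\Pi^\perp_{\bTheta}$-type fresh noise. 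I will express the projections through Gram–Schmidt coordinates. The Cholesky factors $A_\theta,A_\omega$ of the overlap matrices are exactly these coordinates, and $\sigma^2\bI$ regularizes them. The regularization is also precisely what the noise terms $\sigma\bU_\zeta$, $\sigma\bV$ in $\phi'$ supply: adding $\sigma\bU,\sigma\bV$ amounts to augmenting $\bTheta,\bOmega$ by $\sigma\bI$-directions in an enlarged Gaussian matrix. This turns the overlaps into $V_\theta,V_\omega$ exactly and makes them invertible.

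\textbf{Step 2 (alternative side).} I will verify that $\psi$ has the same sequential conditional law. The terms $\bH_\zeta A_\theta$ and $\bG_\zeta A_\omega$ reproduce the conditionally fresh Gaussian components: at step $l$ they have covariance given by the Schur complement, because the columns of $A$ are Cholesky factors. The terms $\bTheta A_\theta^{-1}(\bH^T\bOmega/m+\cdots)_\tu$ and $\bOmega A_\omega^{-1}(\bG^T\bTheta+\cdots)_\tU$ reproduce the conditional means. The upper-triangularity encodes causality: the $l$-th query only sees responses up to $l$, and $\tu$ versus $\tU$ reflects that $\omega(l)$ sees $p(l)$ but $\theta(l)$ does not see $q(l)$. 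The remaining piece is the matching of the $z$ terms. In $\psi$ the $\bGamma_\zeta$ term carries coefficient $z$. In $\phi'$ the matching term carries coefficient $\sqrt{1+z^2}$ and combines with an intrinsic Gaussian cross-term of unit variance coming from the Gordon-type decomposition of the bilinear form $\bTheta^T\tlbX\bOmega$. These two independent Gaussians sum in law, since $1+z^2=1+z^2$. The $\bW$ terms match the $\sigma\bU,\sigma\bV$ augmentation, giving the $\sigma$-blocks of $V$.

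\textbf{Step 3 (assembling).} I will prove by induction on $l$ that the joint laws of $(\theta^l,p^l,\omega^l,q^l)$ coincide. I will use the fact that the maps in \eqref{eq:theta_omega} are deterministic, so only the conditional law of the next response pair matters. Finally, Fubini over $(\zeta_i)$ yields $\bbE[h(\xi_\psi)]=\bbE[h(\xi'_\phi)]$.

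\textbf{Main obstacle.} I expect the hardest part to be the bookkeeping in Step 2. There I must check that the conditional covariance of the original increments, including the $\sigma$ and $\sqrt{1+z^2}$ perturbations, equals exactly that of the alternative increments. In particular, the diagonal blocks must be allocated correctly between $(\cdot)_\tu$ and $(\cdot)_\tU$. I would first verify the case $L=J=1$ by hand, then generalize via block Cholesky identities.
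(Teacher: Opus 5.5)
Your overall strategy is a legitimate alternative to the paper's route: induct over time and show that, given the past, the next response pair has the same conditional law in both systems. Your $\sigma$-augmentation remark is also sound. The blocks of $\bW_\zeta$ used in the $q$-part ($\mu\le l$) and in the $p$-part (strictly lower blocks, via $(\bW_\zeta^T)_\tU$) are disjoint, so they act like independent extra rows and columns of an enlarged Gaussian matrix.

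\textbf{The gap is Step 2.} It is the entire content of the proof, and it rests on a description of $\psi$ that is not correct. In $\psi$ the Gaussian matrices are reused at every later step, through $A_\theta$, $A_\omega$, $\bH_\zeta^T\bOmega_\zeta$, $\bG_\zeta^TR^{\frac12}(\zeta)\bTheta$ and both triangular parts of $\bGamma_\zeta$. The past reveals only certain linear combinations of them. Already for $L=2$, $J=1$, one component and $R=\bI$:
\begin{eqnarray*}
&q(1)=\frac{\bg(1)}{\sqrt m}a_\omega(1,1)+\tltheta(1)\left(\frac{\bh^T(1)\omega(1)}{m}+\frac{\sigma W(1,1)+z\Gamma(1,1)a_\omega(1,1)}{\sqrt m}\right),\\
&p(2)=\bh(1)a_\theta(1,2)+\bh(2)a_\theta(2,2)+\tlomega(1)\,\frac{\bg^T(1)\theta(2)+\sigma W(2,1)+z\left(\Gamma(1,1)a_\theta(1,2)+\Gamma(2,1)a_\theta(2,2)\right)}{\sqrt m}.
\end{eqnarray*}
Here $\theta^T(1)\bg(1)$ and $\Gamma(1,1)$ were seen in $q(1)$ only through one noisy combination. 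Their contribution to the coefficient of $\tlomega(1)$ in $p(2)$ is therefore neither past-measurable nor independent of the past. Its conditional mean is a regression on $q(1)$, and it keeps a positive posterior variance when $\sigma>0$. For $l\ge 3$, even the columns $\bh(\mu)$ with $\mu<l$ are known from the $p$'s only modulo $\mathrm{span}\{\tlomega(\nu)\}_{\nu<\mu}$.

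So the only conditionally fresh pieces are $\bh(l)a_\theta(l,l)$, $\bg(l)a_\omega(l,l)$ and the newly used entries of $\bW_\zeta,\bGamma_\zeta$. The $(\cdot)_\tu$ and $(\cdot)_\tU$ terms are not the conditional means. The same issue affects Step 1: in $\phi'$ the entry $\Gamma(1,1)$ enters both $q(1)$ and $p(2)$, so you must condition jointly on $(\tlbX,\bU,\bV,\bGamma_\zeta)$, not just apply Bolthausen's lemma to $\tlbX$. Your proposed sanity check $L=J=1$ is too small to see any of this.

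\textbf{Why your evidence does not close the gap.} The count $1+z^2=1+z^2$ and the Cholesky structure concern covariances of the two processes at a fixed, deterministic $\xi$. Those are exactly what the paper needs, because it never computes conditional laws. It interpolates $\rho_t=\sqrt t\phi'+\sqrt{1-t}\psi$, applies Stein's identity, and reduces everything to two identities: $\bbE[\psi(\xi)\psi(\xi)^T]=\bbE[\phi'(\xi)\phi'(\xi)^T]$, and $\bbE[\frac{\partial\psi}{\partial\xi}M\psi]=\bbE[\frac{\partial\phi'}{\partial\xi}M\phi']$ for causal (block-triangular) $M\in\calM$. Causality is what makes terms like $\ba^T_\omega(\nu;\zeta)\td\ba_\omega(l;\zeta)=\frac 1m\omega_\zeta^T(\nu)\td\omega_\zeta(l)$ agree. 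Equality of fixed-$\xi$ second moments does not by itself give equality of conditional laws along the random trajectory.

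To close Step 2 you would have to compute the Gaussian posteriors of the reused noise on both sides. You would then show that the conditional means and covariances of each new response coincide. In the $L=2$ example one can check that the conditional covariances of $p(2)$ do agree, so the route appears viable. But the agreement comes from a cancellation between posterior variances, not from the fresh/mean split you describe. For general $L$ this computation is the whole proof, and your proposal does not contain it.

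A minor point: $q(l)$ reveals only the sum over $\zeta$. The components therefore do not decouple unless you refine the filtration by revealing each per-component contribution in both systems.
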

When $m,n\to\infty$, the terms related to $z$ vanish and we may also let $\sigma\to 0$ to achieve $\phi'\to\phi$. This approach leads to a rigorous proof of the dynamic mean field (DMF) approximation as we discuss in section \ref{sec:DMF}. However, we observe that the terms related to $\sigma,z$ may also be eliminated in finite dimensions, which we discuss next.

\subsection{Claim: Elimination of $\sigma,z$}
In finite dimensions, applying Theorem \ref{theorem:main} to the analysis of the original dynamics (with $\phi$ and not $\phi'$) entails eliminating the additional terms in $\phi'$. We note that this situation occurs when $\sigma=0$ and $z^2=-1$. Although, this case is not well-defined within our framework, we claim that the result can be extended to this case in the following way:

\newtheorem{claim}{Claim}
\begin{claim}\label{thm:claim}
    For a test function $h:\calB\to\bbR$, denote by $H(\sigma,z)$ the value of $\bbE[h(\xi_\psi)]$ for different values of $\sigma>0$ and $z\in\bbR$. Suppose that $H(\sigma,z)$ can be analytically extended to a region of complex plain for $z$, including $z=\sqrt{-1}$. Then, we have
    \begin{eqnarray}
        \bbE[h(\xi_\phi)]=\liml_{\sigma\to 0}H\left(\sigma,z=\sqrt{-1}\right),
    \end{eqnarray}
    provided that the limit exists.
\end{claim}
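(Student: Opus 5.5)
Taking Theorem \ref{theorem:main} as given, the plan is to treat both sides as functions of $(\sigma,z)$ and identify them through analytic continuation in the parameter $s:=1+z^2$. Theorem \ref{theorem:main} says that, for real $z$ and $\sigma>0$, $H(\sigma,z)=\bbE[h(\xi'_\phi)]$. In \eqref{eq:perturbabtion}, $z$ enters $\phi'$ only through the factor $\sqrt{(1+z^2)/m}$ in front of the $\bGamma_\zeta$-terms. So define $F(\sigma,t):=\bbE[h(\xi'_\phi)]$, where the $\bGamma$-perturbation in \eqref{eq:perturbabtion} is multiplied by $t$ instead of $\sqrt{1+z^2}$. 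Theorem \ref{theorem:main} then gives $H(\sigma,z)=F(\sigma,\sqrt{1+z^2})$ for all real $z$. At $z=\sqrt{-1}$ we have $t=0$, and $\xi'_\phi$ with $t=0$ and $\sigma\to0$ is the solution $\xi_\phi$ of the original system. The claim therefore splits into two statements: (i) the analytic extension of $H$ in $z$ agrees with $F(\sigma,\sqrt{1+z^2})$ continued in $z$, evaluated at $z=\sqrt{-1}$; (ii) $\lim_{\sigma\to0}F(\sigma,0)=\bbE[h(\xi_\phi)]$.

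For (i), I would first show that $t\mapsto F(\sigma,t)$ extends to an entire function of $t^2$. Conditioned on $\bX,\bU_\zeta,\bV$, the recursion for $\xi'_\phi$ is sequential, so the $\bGamma$-perturbation enters linearly at each step through differentiable (assume analytic) maps. Moreover, $\bGamma\mapsto-\bGamma$ leaves its law invariant, so $F$ is even in $t$. I would then write $F$ as a Gaussian integral over $\bGamma$ and rescale $\bGamma\to\bGamma/t$. This gives $F(\sigma,t)=\int g(\gamma)\,(2\pi t^2)^{-N/2}e^{-\|\gamma\|^2/2t^2}d\gamma$, a heat-kernel expression in the variance $s-1=t^2$. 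It is analytic in $t^2$ on a half-plane, and its continuation to $t^2\to0$ reproduces $g(0)$ whenever $g$ is smooth. Because $s=1+z^2$ is analytic in $z$, $F(\sigma,\sqrt{1+z^2})$ is analytic in $z$ near the real axis. It coincides with $H$ on $\bbR$, so by the identity theorem it coincides with the assumed extension of $H$ on any connected domain containing $\bbR$ and $\sqrt{-1}$. Evaluating at $z=\sqrt{-1}$ gives $F(\sigma,0)$, the expectation with the $\bGamma$-terms absent.

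For (ii), at $t=0$ the only remaining perturbation is $\sigma[\,m^{-1/2}\sum_\zeta R^{1/2}(\zeta)\bU_\zeta;\ \bV\,]$. Since the sequential maps are continuous, $\xi'_\phi\to\xi_\phi$ almost surely as $\sigma\to0$, and dominated convergence (bounded continuous $h$ first, then a density argument) gives the limit. The statement is conditional on the limit existing, which is what makes this step consistent.

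The main obstacle is step (i). The assumed extension of $H$ is a continuation of the alternative-process expectation, and we must verify that this continuation matches the heat-kernel continuation of $F$. Two things are needed: regularity, so that $F$ is genuinely analytic in $t^2$ (this calls for smoothness or polynomial growth of $h\circ\xi'_\phi$ in $\bGamma$), and the path from $\bbR$ to $\sqrt{-1}$ staying inside a connected domain of analyticity. The branch point of $\sqrt{1+z^2}$ at $z=\pm\sqrt{-1}$ is harmless because $F$ depends only on $t^2=1+z^2$. I would first try to handle $h$ polynomial or Gaussian-smoothed, where all these quantities are explicit moment polynomials in $1+z^2$, and then pass to general $h$ by approximation.
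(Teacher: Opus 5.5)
The paper does not prove this claim; it explicitly defers the proof, so your argument must stand on its own. Your reduction via Theorem~\ref{theorem:main} to $F(\sigma,\sqrt{1+z^2})$, and your step (ii) for bounded continuous $h$, correctly formalize the paper's heuristic that $\sigma=0$, $z^2=-1$ removes the extra terms of $\phi'$. The gap is in step (i).

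Put $G_\sigma(s):=F(\sigma,\sqrt{s})=\int g(\gamma)(2\pi s)^{-N/2}e^{-\|\gamma\|^2/(2s)}\,d\gamma$. For bounded $h$ this is analytic on $\{\mathrm{Re}\, s>0\}$, but in general no further. The point $s=0$, which is exactly $z=\pm\sqrt{-1}$, is a boundary point, and continuing past it amounts to running the heat flow backwards. Your assertion that $G_\sigma$ is entire is therefore false in general.

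Smoothness of $g$ does not rescue it. If $g\geq 0$ is smooth, not identically zero, and vanishes near the origin, then $G_\sigma(s)=O(e^{-c/s})$ as $s\downarrow 0$ while $G_\sigma>0$ on $(0,\infty)$. So $G_\sigma$ has no analytic extension to any neighbourhood of $0$. Polynomial or Gaussian-smoothed $h$ does not help either, because $g$ depends on $\bGamma$ through the nonlinear recursion: the maps $\theta_l,\omega_l$ and the Cholesky factors $A_\theta^{\pm1},A_\omega^{\pm1}$ along the trajectory. Approximating a general $h$ afterwards also gives no control over continued values at $\sqrt{-1}$.

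Hence the identity theorem cannot be applied on a domain containing $\sqrt{-1}$ with $G_\sigma(1+z^2)$ as one of the two functions. Notice that if $G_\sigma$ were entire, the hypothesis of the claim would hold automatically. The hypothesis is needed exactly because $s=0$ is a boundary point; as you say, the branch point of $\sqrt{1+z^2}$ is harmless.

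The fix is to let the hypothesis do this work. The function $z\mapsto G_\sigma(1+z^2)$ is analytic on $\mathcal{P}:=\{z:\mathrm{Re}(1+z^2)>0\}$. This is a connected set containing $\bbR$ and $\{iy:|y|<1\}$, with $\pm\sqrt{-1}\in\partial\mathcal{P}$.

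Suppose the extension region $\mathcal{E}$ contains the segment $[0,\sqrt{-1}]$. This must be made part of the hypothesis: if $\sqrt{-1}$ can be reached from $\bbR$ inside $\mathcal{E}$ only by leaving $\mathcal{P}$, nothing ties the given extension to $G_\sigma$ near $\sqrt{-1}$. Under this assumption, the identity theorem on the component of $\mathcal{E}\cap\mathcal{P}$ containing $0$ gives $H(\sigma,iy)=F(\sigma,\sqrt{1-y^2})$ for $0\le y<1$, which is a genuine expectation.

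Now let $y\uparrow 1$. On one side use continuity of the extension at $\sqrt{-1}$. On the other use $\lim_{t\downarrow 0}F(\sigma,t)=F(\sigma,0)$, which follows by dominated convergence since for fixed $\sigma>0$ the recursion is continuous in the perturbation. Together these give $H(\sigma,\sqrt{-1})=F(\sigma,0)$, and your step (ii) finishes the argument.

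Finally, drop the density argument in (ii), because the claim is false for discontinuous $h$. Take $\theta_0\equiv 0$ and $h=\chi_{\{p(0)=0\}}$; then $\bbE h(\xi_\phi)=1$. In $\xi_\psi$, however, the block $p(0)=[\bh_\zeta(0)a_\theta(0,0;\zeta)]_{\zeta_i=\zeta}$, with $a_\theta^T(0,0;\zeta)a_\theta(0,0;\zeta)=\sigma^2\bI$, is a nondegenerate Gaussian. So $H\equiv 0$ and the right-hand side equals $0$. The statement should be restricted to bounded continuous $h$.
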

Note that $H(\sigma,z)$ is an even function of $z$ on the real line and therefore lacks odd terms in its Taylor expansion at $z=0$. As a result, it remains real-valued when extended to $z^2=-1$. We do not prove Claim \ref{thm:claim}, but use this approach for obtaining expressions in our case study, i.e. obtain expressions for arbitrary $z,\sigma$ and let $\sigma\to 0$ and $z^2=-1$.

\section{Approximation Scheme in Large Problems}
In this section, we turn our attention to large values of $m,n$. In particular, we assume that they grow with a fixed relative speed $\gamma=\nicefrac nm$ and hence we only mention the dependency on $m$ in the notation. To analyze the training dynamics  \eqref{eq:p_q} and \eqref{eq:theta_omega}  in this regime, our result in Section \ref{sec:main_result} suggests to study the alternative dynamics $\psi(\xi)+\rho_0(\xi)=0$. As we observe in Theorem \ref{thm:DMF}, this approach provides the asymptotically exact dynamics when $m,n$ approach infinity. However,
under Claim \ref{thm:claim}, the relation between $\phi,\psi$ is valid for any finite value of $m,n$. Specifically, when $m,n$ are large, Claim \ref{thm:claim} leads to a method to obtain sharper approximation of the training dynamics than the limit dynamics. To explain this method, we start by explicitly writing $\psi(\xi)+\rho_0(\xi)=0$ as a sequential process. For this purpose,  we denote by $\bg_\zeta(l),\bh_\zeta(l)$ the $l^\tth$ block column of $\bG_\zeta,\bH_\zeta$, respectively. Then, we may write the alternative dynamics as follows.
\begin{eqnarray}\label{eq:dyn_calc}
    &q(l)=e(l)+\frac 1{\sqrt{m}}\tlbg(l)+\suml_{\zeta\in\calZ}\suml_{\mu\leq l}R(\zeta)\theta(\mu)
            b_\theta(\mu,l;\zeta),\nwl
    &p(l)=\left[\bone \beta(l,\zeta)+\tlbh_\zeta(l)+\suml_{\mu< l}\omega_\zeta(\mu)b_\omega(\mu,l;\zeta)\right]_{\zeta_i=\zeta},
\end{eqnarray}
where throughout the paper, $\bone$ denotes an all-one vector of suitable dimension and
\begin{eqnarray}\label{eq:param_calc}
    &\tlbg(l):=\suml_{\zeta\in\calZ}\suml_{\mu}R^{\frac 1 2}(\zeta)
    \bg_\zeta(\mu)
    a_\omega(\mu,l; \zeta),\quad \tlbh_\zeta(l):=\suml_{\mu}\bh_\zeta(\mu) a_\theta(\mu,l; \zeta)
    \nwl &e(l):=\suml_{\zeta\in\calZ}\hat x(\zeta)\frac{\bone^T\omega_\zeta(l)}{m},\quad \beta(l,\zeta):=\hat x^T(\zeta)\theta(l) 
\end{eqnarray}
and $b_\theta(\mu,l;\zeta), b_\omega(\mu,l;\zeta)$ are the blocks of the matrices $B_\theta(\zeta), B_\omega(\zeta)$, respectively given by
\begin{eqnarray}\label{eq:B_calc}
    &
    A^{-1}_\theta(\zeta)\left(\frac{\bH_\zeta^T\bOmega_\zeta}{m
           }+\frac{\sigma\bW_\zeta+z\bGamma_\zeta A_\omega(\zeta)}{\sqrt{m}}\right)_\tu,\quad 
           A^{-1}_\omega(\zeta)\left(\frac{\bG_\zeta^TR^{\frac 12}(\zeta)\bTheta+\sigma\bW_\zeta ^T+z\bGamma_\zeta^TA_\theta(\zeta)}{\sqrt{m
          }}\right)_\tU.
\end{eqnarray}

This expression, allows us to propose a fixed point iterative scheme to provide a sequence of increasingly more accurate approximations of the statistics of the dynamical system. This scheme is given in Algorithm \ref{alg:iterative}.
\begin{algorithm}[tb]
   \caption{Iterative Approximation Scheme}
   \label{alg:iterative}
\begin{algorithmic}
   \STATE {\bfseries Initiate:} Solve the DMF solution to obtain $\bTheta,\bOmega$
   \REPEAT
   \STATE Calculate $\{V_\theta(\zeta), V_\omega(\zeta)\}$ by \eqref{eq: V_calculate}.
   \STATE Calculate $\{A_\theta(\zeta), A_\omega(\zeta)\}$ by block Cholesky-type decomposition of $\{V_\theta(\zeta), V_\omega(\zeta)\}$.
   \STATE Calculate $\{\tlg_\zeta(l),\tlh_\zeta(l), e(l),\beta(l,\zeta)\}$ by \eqref{eq:param_calc}
   \STATE Calculate $\{B_\theta(\zeta), B_\omega(\zeta)\}$ by \eqref{eq:B_calc}
    \STATE Calculate the corrected dynamics $\bTheta,\bOmega$ by \eqref{eq:dyn_calc}.   
   \UNTIL{Target precision is reached}
\end{algorithmic}
\end{algorithm}

\subsection{Initialization: Dynamic Mean Field Approximation}\label{sec:DMF}
We initiate Algorithm \ref{alg:iterative} by the asymptotic dynamics of $\xi_\psi$, under the \emph{mean field} assumption, i.e. letting the parameters $A_\theta(\zeta), B_\theta(\zeta), A_\omega(\zeta), B_\omega(\zeta), e(l), \beta(l,\zeta)$ concentrate on fixed limit values, denoted by $\bar A_\theta(\zeta),\bar B_\theta(\zeta), \bar A_\omega(\zeta), \bar B_\omega(\zeta), \bar e(l), \bar \beta(l,\zeta)$, respectively. Replacing these values in \eqref{eq:dyn_calc} provides the \emph{dynamic mean field (DMF)} approximation of the training dynamics, that we denote  by $\bar\xi=(\bar\bQ,\bar\bP)$. Further, we assume that the terms related to $z$ nicely vanish and also we let $\sigma\to 0$.  Assuming that the expressions continuously extend to the limit, we simply set $\sigma=0$ in our expressions. Note that from \eqref{eq:param_calc}, $\tlbg(l),\tlbh_\zeta(l)$ become independent centered Gaussian processes with the following kernels:
\begin{eqnarray}
    &\cov[\tlbg(j,l),\tlbg(j',l')]=
    \suml_{\zeta\in\calZ}R(\zeta)\barV_{\omega, j,j'}(l,l';\zeta),\nwl
    &\cov[\tlbh_\zeta(j,l),\tlbh_{\zeta}(j',l')]=\barV_{\theta, j,j'}(l,l';\zeta)\bI,
\end{eqnarray}
where $\tlbg(j,l),\tlbh(j,l)$ are the $j^{\tth}$ column of $\tlbg(l),\tlbh(l)$, respectively and $\barV_{\omega, j,j'}(l,l';\zeta), \barV_{\theta, j,j'}(l,l';\zeta)$ are the $(j,j')$ elements of the $(l,l')$ blocks of $\barV_\omega(\zeta)=\barA_\omega(\zeta)^T\barA_\omega(\zeta)$  and $\barV_\theta(\zeta)=\barA_\theta(\zeta)^T\barA_\theta(\zeta)$, respectively. Finally, the limit values can  be calculated by the concentration points of the relations in \eqref{eq:param_calc} and \eqref{eq:B_calc}, which leads to the following chain of equations:
\begin{eqnarray}\label{eq:DMF1}
    \bar\beta(l,\zeta)=\bbE[\hat x^T(\zeta)\bar\theta(l)],\quad \bar e(l)=\suml_{\zeta\in \calZ}\hat x(\zeta)\bar\alpha(l,\zeta),\quad \bar\alpha(l,\zeta):=\bbE\left(\frac{\bone^T\bar\omega_\zeta(l)}{m}\right) 
\end{eqnarray}
and
\begin{eqnarray}\label{eq: B_mean}\label{eq:DMF2}
    &\bar B_\theta(\zeta)=\bar A^{-1}_\theta(\zeta)\bbE\left(\frac{\bH_\zeta^T\bar\bOmega_\zeta}{m
           }\right)_\tu,\quad\bar B_\omega(\zeta)=\bar A^{-1}_\omega(\zeta)\bbE\left(\frac{\bG_\zeta^TR^{\frac 12}(\zeta)\bar\bTheta}{\sqrt{m}
           }\right)_\tU,
\end{eqnarray}
where $\bar\omega(l)$ and $\bar\theta(l)$ are the block columns of $\bar\bOmega=\bOmega(\bar\bQ,\bar\bP)$ and $\bar\bTheta=\bTheta(\bar\bQ,\bar\bP)$, respectively. 
Moreover, $\bar A_\theta(\zeta),\ \bar A_\omega(\zeta)$ are respectively the upper triangular parts of the block Cholesky-type decompositions of 
\begin{eqnarray}\label{eq:DMF3}
    \bar V_\theta(\zeta)=\bbE[\bar\bTheta^TR(\zeta)\bar\bTheta],\quad \bar V_\omega(\zeta)=\frac 1m\bbE[\bar\bOmega_\zeta^T\bar\bOmega_\zeta]. 
\end{eqnarray}

Using Stein's identity (integration by part), the expressions in \eqref{eq: B_mean} can also be written in a more familiar \emph{self-consistent equation} form:
\begin{eqnarray}\label{eq:saddle}\label{eq:DMF2p}
    &\bar b_\theta(\mu,l;\zeta)=\frac 1m\left[\bbE\left(\tr\left[\frac{\partial\bar\omega_{j,\zeta}(l)}{\partial\tlh_{j'}(\mu)}\right]\right)\right]_{j',j},\quad\bar b_\omega(\mu,l; \zeta)=\frac 1{\sqrt{m}}\left[\bbE\left(\tr\left[R(\zeta)\frac{\partial\bar\theta_j(l)}{\partial\tlg_{j'}(\mu)}\right]\right)\right]_{j',j}.
\end{eqnarray}
Based on this definition and Theorem \ref{theorem:main}, we prove in the following, the convergence of $\xi_\phi$ to the DMF limit. Here, we assume that $\calB$ is equipped with the maximum of $\{\|q(l)\|_2\}_l,\{\nicefrac{\|p(l)\|_2}{\sqrt m}\}_l$  for any $\xi=(\{q(l)\}_l,\{p(l)\}_l)\in\calB$, as norm. Moreover, we assume the following:
\begin{assum}[Existence and concentration of DMFT]\label{assum:DMFT} There exists a combination of fixed parameters $A_\theta(\zeta)=\bar A_\theta(\zeta), B_\theta(\zeta)=\bar B_\theta(\zeta), A_\omega(\zeta)=\bar A_\omega(\zeta), B_\omega(\zeta)=\bar B_\omega(\zeta), e(l)=\bar e(l), \beta(l,\eta)=\bar \beta(l,\eta)$, for which the DMF dynamics in  \eqref{eq:dyn_calc} satisfies the relations in \eqref{eq:DMF1}, \eqref{eq:DMF2} (or alternatively \eqref{eq:DMF2p}) and \eqref{eq:DMF3}\footnote{If existent, such a combination is unique up to the definition of the Block-Cholesky decomposition, which is unimportant. This is because the chain of equations in \eqref{eq:DMF1}, \eqref{eq:DMF2}  and \eqref{eq:DMF3} also has a sequential nature.}. Moreover, 
    there exists a constant $c_{m}$ vanishing with $m$, such that the variables $\bar\bTheta^TR(\zeta)\bar\bTheta,\ \nicefrac 1{m}\bar\bOmega_\zeta^T\bar\bOmega_\zeta,\  \nicefrac{\bG_\zeta^TR^{\frac 12}(\zeta)\bar\bTheta}{\sqrt{m}
           },$ $ \nicefrac{\bH_\zeta^T\bOmega_\zeta}{m
           }$, $\hat x^T(\zeta)\bar\theta(l), \nicefrac{\bone^T\bar\omega_\zeta(l)}{m} $ all have a vanishing probability to deviate more than $c_m$ from their mean value.
\end{assum}
Now, we may express our result:
\begin{theorem}\label{thm:DMF}
    Suppose that $J,L,d_z$ are fixed and there is  a constant $C$ such that $\theta_l,\omega_l$ are all $C-$ Lipschitz continuous\footnote{This precisely means that $\{\theta_l\}$ and $\{\omega_l\}$ are both bounded input, bounded output (BIBO)-stable dynamical systems with the input signal $\{q(l),p(l)\}$.} and $\mu(\zeta),R(\zeta)$ are bounded by $L$ in (operator) 2-norm. Moreover, Assumption \ref{assum:DMFT} holds true. Then, for every bounded, 1-Lipschitz map $h:\calB:\to [0\ 1]$ we have 
    \begin{eqnarray}
        \left|\bbE h(\xi_\phi)-\bbE h(\bar\xi)\right|<C_{m},
    \end{eqnarray}
    where $C_{m}$ is a constant that only depends on $C$, the fixed parameters and $c_m$, and vanishes with $m$.
    \begin{proof}
        The proof is give in Section \ref{sec:proof_DMF}.
    \end{proof}
\end{theorem}

\section{Proof of Theorem \ref{theorem:main}}
In this section, we discuss the main steps involved in the proof of Theorem \ref{theorem:main}. More details are given in the appendix.
\subsection{Gordon's Comparison Lemma}
The proof of Theorem \ref{theorem:main} is based on the Gordon's comparison lemma, which allows one to compare the same statistics of two different Gaussian objects. We generalize this reult and state in the following:
\begin{theorem}[Extended Gordon Lemma]\label{lem:gordon}
    Consider two centered Gaussian random variables $\phi,\psi$ valued on a separable Banach space $(V,\|\ldotp\|)$. Take  a continuously twice Frechet-differentiable, real valued functional $f$ on $V$ with  bounded first and second order derivatives. Assume that there exists a subset $\calH$ of bounded $(0,2)-$tensors on $V$  and a constant $c$ such that $D^2f\in \calH$ and for all $H\in\calH$ we have $\bbE[H(\phi,\phi)]\leq \bbE[H(\psi,\psi)]+c$.
    Then, $\bbE[f(\psi)]\leq \bbE[f(\phi)]+\frac c 2$. 
    \begin{proof}
        The proof is given in Appendix \ref{sec:proof_gordon}.
    \end{proof}
\end{theorem}

\subsection{Gordon Lemma for zeros of Gaussian Processes}\label{sec:gordon_fix}
We attempt to apply Gordon's lemma in Theorem \ref{lem:gordon} to the analysis of the zeros of Gaussian processes. For this purpose, we take $\phi,\psi$ as two centered Gaussian processes on an open subset $U\subseteq\bbR^n$ with values on a Euclidean space $\bbR^N$ (vectorization of $\xi$) for some $N$. Then, we require to have a twice differentiable map, $\hat\xi$, assigning to their realization $\rho$ a zero $\hat\xi=\hat\xi(\rho)$, satisfying $\rho(\hat\xi)=0$. Indeed, this requirement restricts $\rho$ to a relatively small family and substantially limits the processes. However, our case of interest, regarding dynamical system, fits this framework well.


We consider the space $V=V_U$ of continuously twice differentiable functions  $\rho:U\to\bbR^N$ and define
the norm $\|\rho\|$ as the infimum of all values of $L$ for which $\rho$ and its first and second derivative are bounded by $L$ in operator norm.  
Further, we denote by $K_U$ the set of all functions  $\rho\in V$ such that $\rho$ has a unique zero $\hat\xi=\hat\xi_U(\rho)$ on $U$ and the matrix $\frac{\partial}{\partial\xi}\rho(\hat\xi)$ is invertible.
Then, we observe the following:
\begin{prop}\label{prop:deriv}
    The set $K_U$ is open in $V$ and 
     the map $\hat\xi=\hat\xi_U$ is Fr\'echet-twice differentiable on $K_U$ with
        \begin{eqnarray}
            &D\hat\xi(\rho)[\psi]=J^{-1}\psi,\nwl &D^2\hat\xi(\rho)[\psi_1,\psi_2]=J^{-1}\frac{\partial^2\rho}{\partial\xi\partial\xi^T}\left[J^{-1}\psi_1,J^{-1}\psi_2\right]+J^{-1}\frac{\partial\psi_1}{\partial\xi}J^{-1}\psi_2+ J^{-1}\frac{\partial\psi_2}{\partial\xi}J^{-1}\psi_1 
        \end{eqnarray}
        where all values are evaluated at $\xi=\hat\xi(\rho)$ and $J=-\frac{\partial\rho}{\partial\xi}$.
\end{prop}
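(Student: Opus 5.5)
The plan is to treat $\hat\xi$ as the implicit function defined by the equation $F(\rho,\xi):=\rho(\xi)=0$ on the product $V\times U$, and to apply the Banach-space implicit function theorem. Concretely, I would first check that the evaluation map $E:V\times U\to\bbR^N$, $E(\rho,\xi)=\rho(\xi)$, is twice continuously Fr\'echet differentiable. Since $E$ is linear in $\rho$, the derivatives are $D_\rho E[\psi]=\psi(\xi)$, $D_\xi E=\frac{\partial\rho}{\partial\xi}(\xi)$, $D^2_{\rho\xi}E[\psi,\cdot]=\frac{\partial\psi}{\partial\xi}(\xi)$, $D^2_{\xi\xi}E=\frac{\partial^2\rho}{\partial\xi\partial\xi^T}(\xi)$ and $D^2_{\rho\rho}E=0$. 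Continuity of these maps in $(\rho,\xi)$ uses the norm on $V$, which bounds $\rho$, $D\rho$ and $D^2\rho$ uniformly, together with continuity of $D^2\rho$ in $\xi$. The one delicate point is the continuity of $\xi\mapsto D^2\psi(\xi)$ uniformly over bounded $\psi$. If this fails, I would use the fact that the implicit function theorem needs only $C^1$ to produce $\hat\xi$ near $\rho_0$, and then verify the second derivative directly by differentiating the first-derivative formula.

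Next, fix $\rho_0\in K_U$ with zero $\xi_0$ and invertible $J_0=-\frac{\partial\rho_0}{\partial\xi}(\xi_0)$. The implicit function theorem gives neighbourhoods $\calM\ni\rho_0$ and $\calJ\ni\xi_0$ and a $C^2$ map $g:\calM\to\calJ$ such that $g(\rho)$ is the unique zero of $\rho$ in $\calJ$. To show openness of $K_U$ it remains to exclude zeros of $\rho$ outside $\calJ$ when $\|\rho-\rho_0\|$ is small. This is the main obstacle: $U$ is open and possibly unbounded, so $\rho_0$ may come arbitrarily close to $0$ far from $\xi_0$ or near $\partial U$.

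My first attempt at this obstacle is the following. On $U\setminus\calJ$, uniqueness of the zero gives $\rho_0\neq0$. I would use the uniform closeness $\sup_U|\rho-\rho_0|\leq\|\rho-\rho_0\|$ and need $\inf_{U\setminus\calJ}|\rho_0|>0$. This infimum is not automatic, so I expect it either to be built into the definition of $K_U$ or to require a restriction to a smaller open set. For the dynamical-system application the issue disappears: there the zero is global and unique by the sequential structure, which holds for every realization. I would therefore state openness relative to this structure if the general claim needs a coercivity condition. Once openness holds, $\hat\xi=g$ near $\rho_0$, so $\hat\xi$ is $C^2$.

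Finally, I obtain the formulas by differentiating the identity $\rho(\hat\xi(\rho))=0$. In direction $\psi$ this gives $\psi(\hat\xi)+\frac{\partial\rho}{\partial\xi}D\hat\xi[\psi]=0$, hence $D\hat\xi[\psi]=J^{-1}\psi$. Differentiating again in direction $\psi_1$, with $\psi=\psi_2$, produces three terms:
\begin{eqnarray*}
\frac{\partial\psi_2}{\partial\xi}D\hat\xi[\psi_1]+\frac{\partial\psi_1}{\partial\xi}D\hat\xi[\psi_2]+\frac{\partial^2\rho}{\partial\xi\partial\xi^T}\left[D\hat\xi[\psi_1],D\hat\xi[\psi_2]\right]-J\,D^2\hat\xi[\psi_1,\psi_2]=0 .
\end{eqnarray*}
Here the first term comes from the $\xi$-dependence of $\psi_2$, the second from differentiating $\frac{\partial\rho}{\partial\xi}$ in $\rho$, and the third from differentiating it in $\xi$. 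Solving for $D^2\hat\xi$ and substituting $D\hat\xi[\psi_k]=J^{-1}\psi_k$ yields exactly the stated expression.
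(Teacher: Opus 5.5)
The paper asserts this proposition without proof, so the only comparison is with the statement itself. Your route is the natural one: apply the implicit function theorem to $E(\rho,\xi)=\rho(\xi)$, then differentiate $\rho(\hat\xi(\rho))=0$ twice. Your derivation of both formulas, including the sign bookkeeping through $J=-\frac{\partial\rho}{\partial\xi}$, is correct.

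The ``delicate point'' you raise does not actually arise, because $E$ is genuinely $C^2$ on $V\times U$.
\begin{itemize}
\item The only second derivative that depends on the direction is $D^2_{\rho\xi}E[\psi]=\frac{\partial\psi}{\partial\xi}(\xi)$. It satisfies $\sup_{\|\psi\|\le 1}\bigl|\frac{\partial\psi}{\partial\xi}(\xi)-\frac{\partial\psi}{\partial\xi}(\xi')\bigr|\le|\xi-\xi'|$ locally, because $\|D^2\psi\|\le 1$.
\item The term $\frac{\partial^2\rho}{\partial\xi\partial\xi^T}(\xi)$ involves only the base point. It is jointly continuous, since $|D^2\rho(\xi)-D^2\rho'(\xi')|\le|D^2\rho(\xi)-D^2\rho(\xi')|+\|\rho-\rho'\|$.
\item The Taylor remainders are likewise uniform over $\|\psi\|\le 1$.
\end{itemize}
So the $C^2$ implicit function theorem applies directly, and no fallback is needed.

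The step you flag as the main obstacle is a genuine gap, and it cannot be closed, because the openness assertion is false as stated.
\begin{itemize}
\item Unbounded domain: take $N=1$, $U=\bbR$, $\rho_0(\xi)=\xi/(1+\xi^2)$. It is bounded with bounded first and second derivatives, its unique zero is $0$, and $\rho_0'(0)=1$, so $\rho_0\in K_U$. Yet for every small $\epsilon>0$ the constant perturbation $\rho_0+\epsilon$, at distance $\epsilon$ in $V$, has the two zeros $\frac{-1\pm\sqrt{1-4\epsilon^2}}{2\epsilon}$ (one near $-\epsilon$, one near $-1/\epsilon$). Hence it is not in $K_U$.
\item Bounded domain: on $U=(-1,1)$, $\rho_0(\xi)=\xi-\xi^2$ lies in $K_U$, while $\rho_0-\epsilon$ has the zeros $\frac{1\pm\sqrt{1-4\epsilon}}{2}$, both in $U$.
\end{itemize}

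What your argument does prove, without openness, is the following. For $\rho\in K_U$ in your neighbourhood $\calM$ of $\rho_0$, we have $\hat\xi(\rho)=g(\rho)$, since $g(\rho)$ is a zero of $\rho$ and the zero in $U$ is unique. Thus $\hat\xi$ is locally the restriction of a $C^2$ map defined on an open set, and the stated formulas hold with $g$ in place of $\hat\xi$. Working with $g$ is also what legitimizes your differentiation, since $\rho(g(\rho))=0$ holds on all of $\calM$.

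Openness itself needs an extra hypothesis, such as $\inf_{U\setminus\mathcal{N}}|\rho_0|>0$ for every neighbourhood $\mathcal{N}$ of the zero. Under that hypothesis your argument shows that a neighbourhood of $\rho_0$ lies in $K_U$.

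Finally, your remark that the issue disappears in the dynamical application holds only in a relative sense. Causal, sequentially solvable perturbations all keep a unique zero, but a generic small element of $V$ breaks causality. So that structure gives openness within the causal class, not openness in $V$.
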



We are to apply Gordon's lemma to functions of the form $f(\rho)=h\circ\hat\xi_U(\rho+\rho_0)$
where the test function $h:\bbR^N\to\bbR$ is twice differentiable and the fixed function $\rho_0\in V$ allows us to work with non-centered Gaussian processes, as well. We note that in this case, $D^2f[\psi,\psi]$ only depends on the products of $\psi(\xi)$ to either $\psi(\xi)$ or $\frac{\partial\psi}{\partial\xi}(\xi)$. This leads to our next result, for which we first need to introduce the following: 
\begin{defi}
    For any subset $\calM$ of $N\times N$ real matrices and any open set $U\subseteq\bbR^N$, we denote by $K_U(\calM)$ the set of all functions $\rho\in V$ with a unique zero $\hat\xi$ on $U$ such that $\frac{\partial}{\partial\xi}\rho(\hat\xi)$ is invertible and its inverse is in $\calM$. In particular, we have $K_U=K_U\left(\bbR^{N\times N}\right)$.
\end{defi}
Now, we express our result:
\begin{theorem}\label{thm:compare_equal}
    Consider two independent Gaussian processes on $\bbR^N$ and valued in $\bbR^N$,  a function $\rho_0\in V_U$ and   a set $\calM\subseteq\bbR^{N\times N}$ satisfying the following:
    \begin{enumerate}
        \item For all $t\in[0\ 1]$, the processes $\sqrt{t}\phi+\sqrt{1-t}\psi+\rho_0$ is in $K_U(\calM)$, almost surely.
        \item For all $\xi\in U$ and $M\in \calM$, it holds that
\begin{eqnarray}\label{eq:matching_gordon}
    &\bbE\left[\psi(\xi)\psi^T(\xi)\right]=\bbE\left[\phi(\xi)\phi^T(\xi)\right],\quad\bbE\left[\frac{\partial\psi}{\partial\xi}(\xi)M\psi(\xi)\right]=\bbE\left[\frac{\partial\phi}{\partial\xi}(\xi)M\phi(\xi)\right].
\end{eqnarray}
\end{enumerate}
Then, 
we have
    $\bbE[h\circ\hat\xi_U(\phi+\rho_0)]=\bbE[h\circ\hat\xi_U(\psi+\rho_0)]$,  for any twice differentiable real-valued function.
\begin{proof}
    We note that according to proposition \ref{prop:deriv}, the conditions of lemma \ref{lem:gordon} hold true with $c=0$. We neglect straightforward derivations and conclude that
    $\bbE[h\circ\hat\xi_U(\phi+\rho_0)]\leq\bbE[h\circ\hat\xi_U(\psi+\rho_0)]$.
    Then, the result is obtained by swapping $\phi,\psi$ and repeating the argument.
\end{proof}
\end{theorem}

\subsection{Proof of Theorem \ref{theorem:main}}
We arrive at  Theorem \ref{theorem:main} as a special case of Theorem \ref{thm:compare_equal} with a particular choice of $\phi=\phi'$ and $\psi$, given in Section \ref{sec:problem} and a suitable choice of $\calM$. More details are provided in Appendix \ref{sec:verify_conditions}.
\section{Example: Classification with Perceptron}
In this section, we apply our approach to the analysis of a perceptron model in a classification task. Our goal is to showcase the nature of the results obtained by our theory and hence we do not compute all variables in detail\footnote{ For the code generating the results, go to \url{https://github.com/Info2Knowledge/GC_dynamics.git}.}. For simplicity, we assume that $\zeta_i=y_i$, i.e. each class corresponds to a single Gaussian component. Hence, we replace $\zeta$ with $y$ in the notation. Moreover, we assume that $R(y)=\bI$, i.e. classes have equal covariances, but they may have different mean vectors $\hat x(y)
$. 
The perceptron model is given by a weight vector $\theta$ and an activation function $\sigma$ with derivative $\sigma'$ (which can be defined in a distribution sense, like in ReLU).
Applying the model to the samples $\{x_i\}$  leads to the feature vector $f=[f_i]_{i}\in\bbR^{m}$, given by
    $f:=\sigma(p),\quad p=\left[p_i:=x_i^T\theta\right]_{i},$
where $\sigma(\ldotp)$ is the elementwise application of the activation function $\sigma$.
To train the model $\theta$ 
we consider the empirical risk minimization framework with the cost function $L(\theta):=\nicefrac 1m\sum_i\ell(f_i,y_i
    ),$
where $\ell$ is a given loss function with the derivative $\ell'$ with respect to the first argument.  
We consider an arbitrary linear, first order (full-batch) optimization algorithm, which can be written as 
\begin{eqnarray}
    \theta(l)=\theta_0\lambda (l)-\suml_{\mu<l}q(\mu)\lambda(\mu,l),\quad q(\mu):=\nabla L(\theta(\mu)),
\end{eqnarray}
where $\theta_0$ is the initialization and $\lambda(l),\lambda(\mu,l)$ are constants describing the algorithm. For example, the momentum gradient descent (GD) algorithm with step size $t$ and momentum constant $s$ corresponds to $\lambda(l)=1$ and $\lambda(\mu,l)=t(1-s^{l-\mu})$ for $\mu<l$. The case $s=0$ corresponds to standard gradient descend.
Using matrix notation, we may rewrite this relation as 
\begin{eqnarray}\label{eq:dyn_matrix}
    \bTheta=\theta_0\blambda^T-\bQ\bLambda,
\end{eqnarray}
where $\blambda$ is the vector of $\lambda(l)$s and $\bLambda$ is the matrix of $\lambda(\mu,l)$, which is strictly upper triangular (zero diagonals).  
We may verify that at the $l^\tth$ iteration, the gradient can be written as
\begin{eqnarray}
    &q(l)=\frac 1m\suml_ix_i\omega_i(l),\quad\omega_i(l)=\omega(p_i(l),y_i),\quad\omega(p,y):=\ell'(\sigma(p),y)\sigma'(p),\
\end{eqnarray}
This representation frames the dynamics as an instance of our setup in \eqref{eq:p_q} and \eqref{eq:theta_omega}. We apply our approach to this case.
\subsection{DMF approximation}
To start, we consider the DMF approximation, where we note that the alternative dynamics in \eqref{eq:dyn_calc} is simplified to
\begin{eqnarray}\label{eq:dyn_calc_simp}
    &\bar q(l)=\bar e(l)+\tlbg(l)+\suml_{\mu\leq l}\bar\theta(\mu)
            \barb_\theta(\mu,l),\nwl
    &\bar p(l)=\left[\bone \bar\beta(l,y)+\tlbh_y(l)+\suml_{\mu< l}\bar\omega_y(\mu)\barb_\omega(\mu,l)\right]_{y_i=y}
\end{eqnarray}
where $\bar b_\theta(\mu,l)=\sum_y \bar b_\theta(\mu,l;y)$ and we note that according to \eqref{eq:saddle}, the coefficients $\bar b_\omega(\mu,l;y)$ are identical for different values of $y$, hence denoted by $\bar b_\omega(\mu,l)$. Moreover, we observe that
\begin{eqnarray}\label{eq:DMF1_Gaussians}
    &\cov[\tlbg(l),\tlbg(l')]=  \barV_{\omega}(l,l')\bI:= \frac 1m\bbE[\bar\omega^T(l)\bar\omega(l')]\bI ,\nwl
    &\cov[\tlbh_y(l),\tlbh_{y}(l')]=\barV_{\theta}(l,l')\bI=\bbE[\bar\theta^T(l)\bar\theta(l')]\bI,
\end{eqnarray}
\begin{figure}
    \begin{subfigure}{0.45\linewidth}
        \centering
        \includegraphics[width=0.95\linewidth]{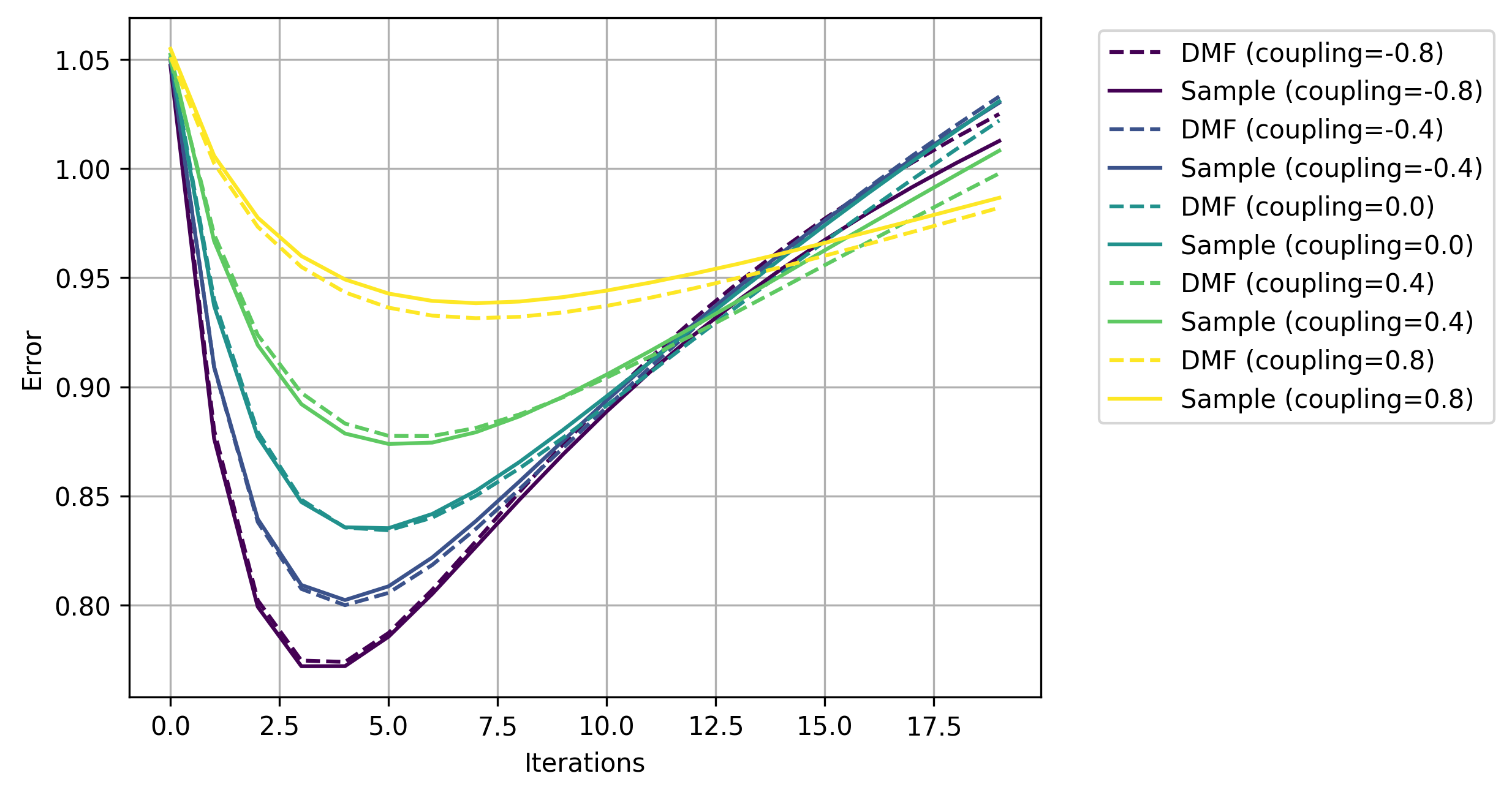}
    \caption{$\gamma=.1$ and $m=10000$}
    \label{fig:lowg_DMF}    
    \end{subfigure}
    ~
    \begin{subfigure}{0.45\linewidth}
        \centering
    \includegraphics[width=0.95\linewidth]{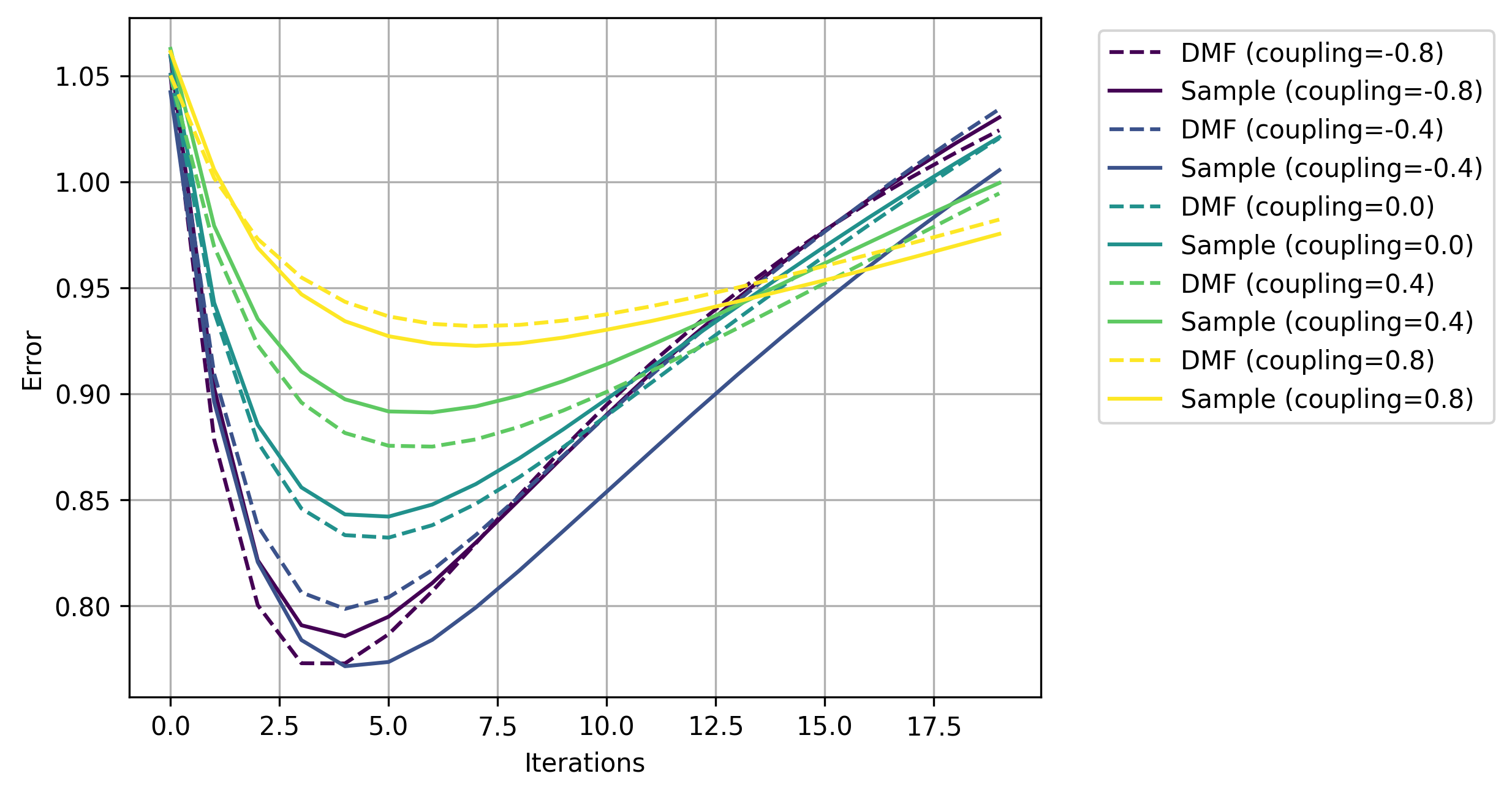}
    \caption{$\gamma=.1$ and $m=1000$}
    \label{fig:lowg_fluc}
    \end{subfigure}
    \newline
    \begin{subfigure}{0.45\linewidth}
        \centering
    \includegraphics[width=0.95\linewidth]{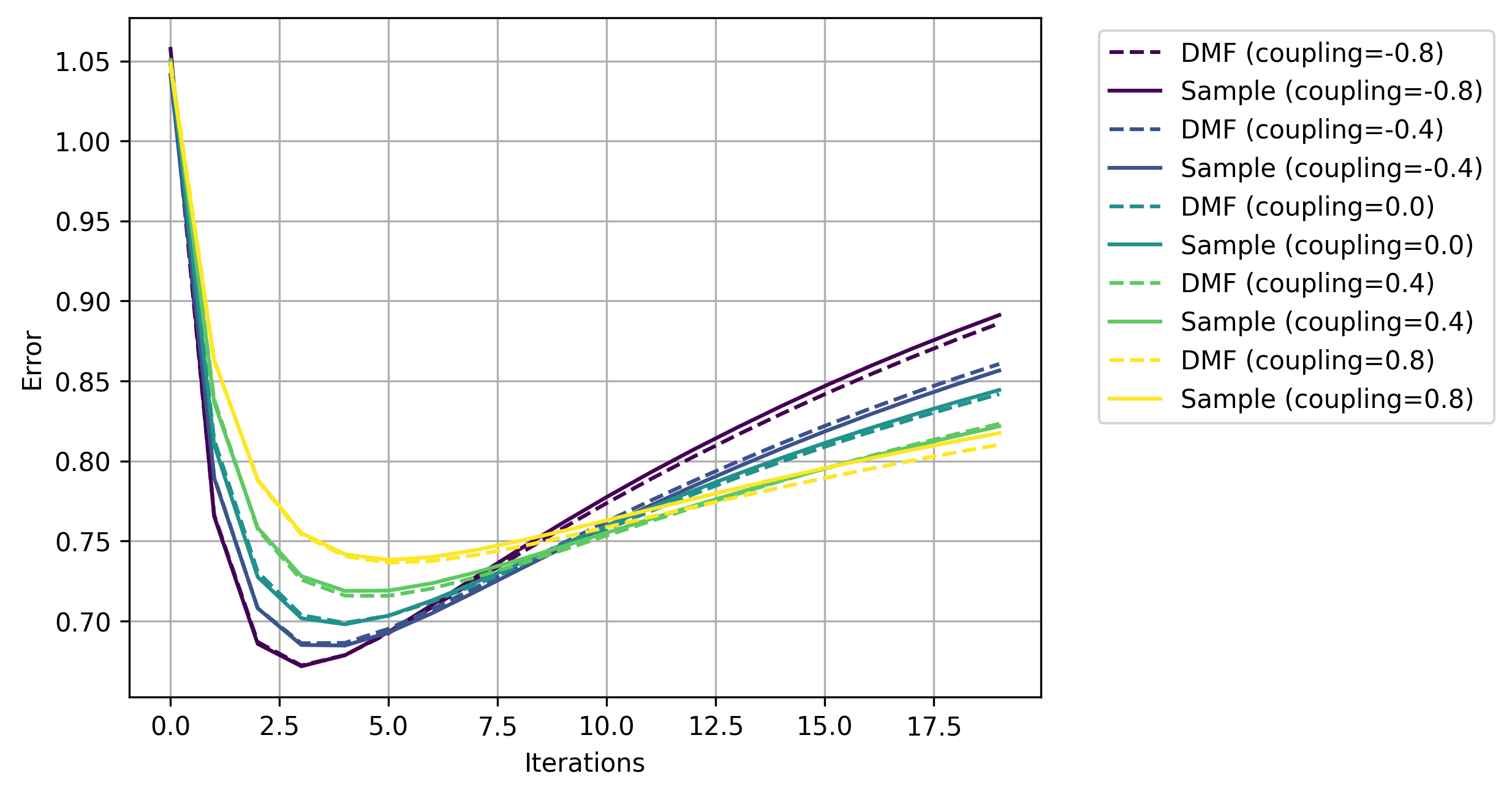}
    \caption{$\gamma=1$ and $m=10000$}
    \label{fig:over_DMF}
    \end{subfigure}
    ~
    \begin{subfigure}{0.45\linewidth}
        \centering
    \includegraphics[width=0.95\linewidth]{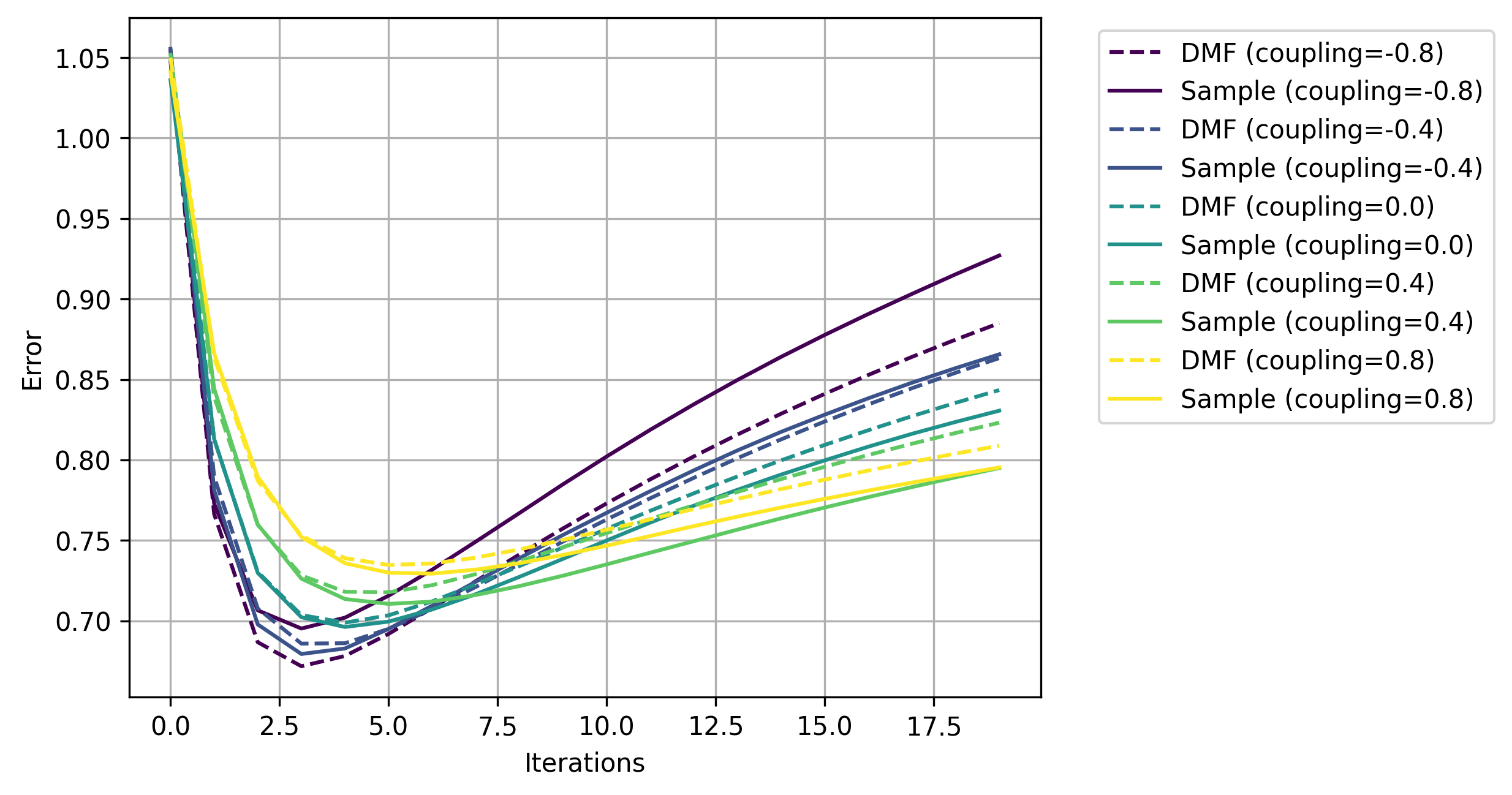}
    \caption{$\gamma=1$ and $m=1000$}
    \label{fig:over_fluc}
    \end{subfigure}
    \caption{Training error for gradient descent ($s=0, t=.2$) with two classes with $\rho(y)=.5, \|\theta_0\|=.1$, and overlaps: $v(y,y)=1$ and varying coupling$=v(0,1)$.}
\end{figure}
We write $\bar e(l)=\sum_y\hat x(y)\bar\alpha (l,y)$ where $\bar\alpha (l,y)=\bbE\left(\nicefrac{\bone^T\bar\omega_y(l)}{m}\right)$ and denote by $\bar\balpha(y)$ their vector. Based on these observations, we use \eqref{eq:dyn_matrix} to conclude that
\begin{eqnarray}
    &\bar\bTheta=\bar\Theta_\td-\frac 1{\sqrt{m}}\tlbG \tilde\bLambda,\quad \bar\Theta_\td:=\theta_0\tilde\blambda^T-\suml_{y\in\calY}\hat x(y)\tilde\balpha^T(y)=-\suml_{y\in\calY^*}\hat x(y)\tilde\balpha^T(y),
\end{eqnarray}
where $\tilde\blambda^T=\blambda^T(\bI+B_\theta\bLambda )^{-1}$ is the propagator of initialization, $\tilde\balpha^T(y)=\bar\balpha^T(\bI+\bLambda B_\theta)^{-1}\bLambda$ is the propagator of mode coefficients, $\tilde\bLambda=(\bI+\bLambda B_\theta)^{-1}\bLambda$ is the noise propagator matrix and $\tlbG$ is the matrix of $\tlbg(l)$s. Moreover, we introduce $\calY^*=\calY\cup\{*\}$ with $\hat x(*):=\theta_0$ and $\tilde\balpha(*):=-\tilde\blambda$, for simplicity. Accordingly, we have
\begin{eqnarray}\label{eq:DMF1_beta_Vt}
    &\barV_\theta=
    \suml_{(y,y')\in\calY^*\times\calY^*}\nu(y,y')\tilde\balpha(y)\tilde\balpha^T(y')+\gamma\tilde\bLambda^T \barV_\omega \tilde\bLambda,\nwl
    &\bar\beta(y)=[\bar\beta(l,y)]_l=-\suml_{y'\in\calY^*}\nu(y,y')\tilde\balpha(y'),\quad y\in\calY,
\end{eqnarray}
where $\nu(y,y')=\hat x^T(y)\hat x(y')$ are the overlaps of different modes and initialization and we remind that $\gamma=\frac{n}{m}$. Next, we note that $\bar\omega_i(l)$ for different $i$s are independent and for $y_i=y$ are identically distributed to a scalar differential equation of the following form, which is a characteristic of the DMF theory:
\begin{eqnarray}\label{eq: characteristic}
    \bar p_y(l)= \bar\beta(l,y)+\tlh(l)+\suml_{\mu< l}\omega(\bar p_y(\mu),y)\barb_\omega(\mu,l)
\end{eqnarray}
where $[\tlh(l)]_l$ is distributed by the covariance $\bar V_\theta$. Moreover, we have 
\begin{eqnarray}\label{eq:DMF1_alpha_Vw}
    &\bar\alpha(l,y)=\rho(y) \bbE[\omega(\bar p_y(l),y)],\quad \bar V_\omega(l,l')=\suml_y\bar V_\omega(l,l';y)\nwl
    &\bar V_\omega(l,l';y)=\rho(y)\bbE[\omega(\bar p_y(l),y)\omega(\bar p_y(l'),y)]
\end{eqnarray}
where $\rho(y)$ is the frequency of the samples $x_i$ with $y_i=y$. Finally, we get
\begin{eqnarray}\label{eq:DMF1_B}
&\bar B_\theta=\suml_y\bar B_\theta(y)=\bar A_\theta^{-1}\suml_y\rho(y) \bbE[h(\mu)\omega(\bar p_y(l),y)]_{\mu,l},\quad\bar B_\omega=-\gamma\tilde\bLambda
\end{eqnarray}
Solving the chain of relations in \eqref{eq:DMF1_beta_Vt},\eqref{eq:DMF1_alpha_Vw} and \eqref{eq:DMF1_B} gives the DMF approximation.
\begin{figure}
\begin{subfigure}{.45\linewidth}
    \centering
    \includegraphics[width=0.95\linewidth]{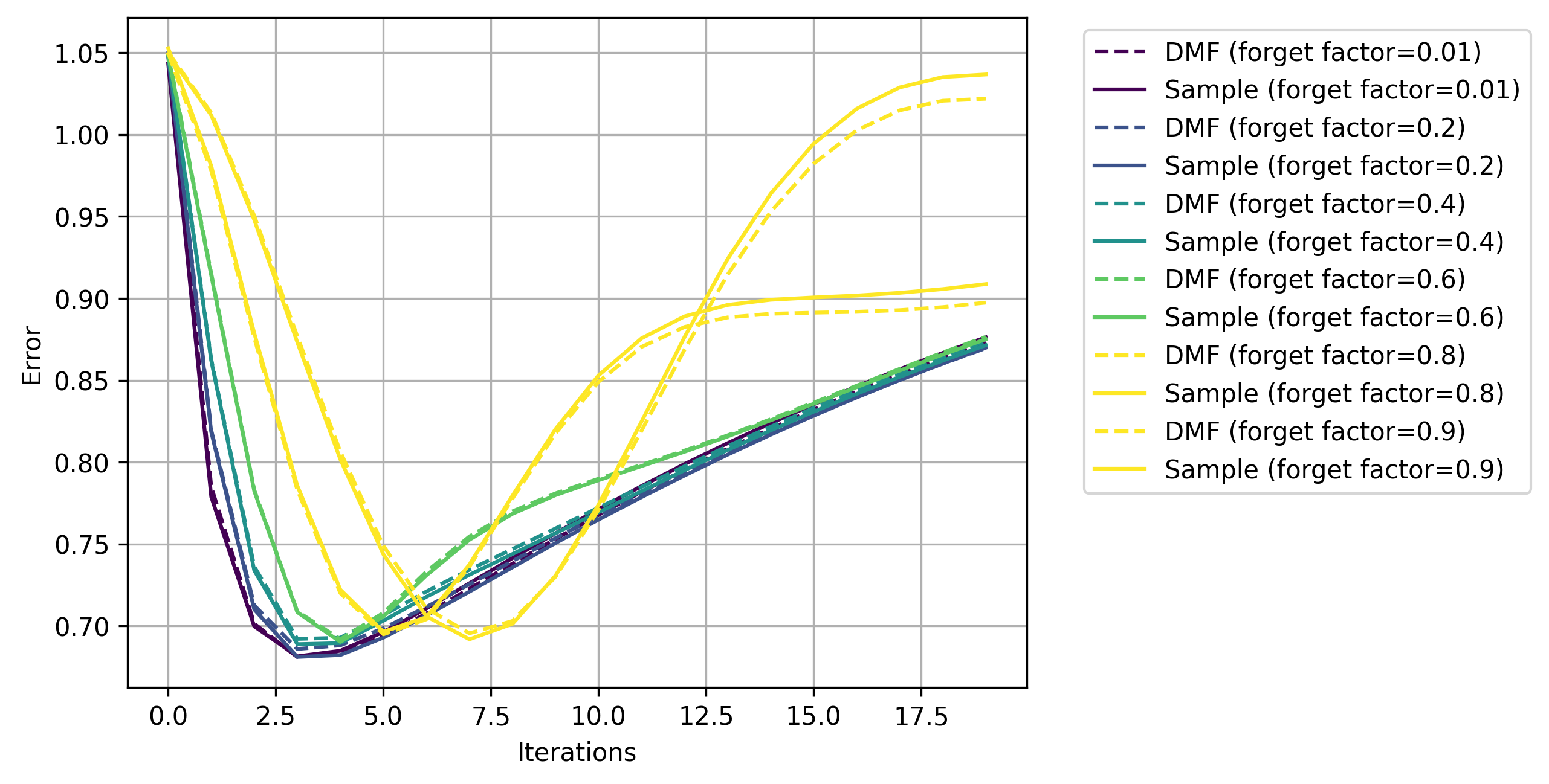}
    \caption{$m=10000$}
    \label{fig:moment_DMF}
\end{subfigure}
~
\begin{subfigure}{.45\linewidth}
    \centering
    \includegraphics[width=0.95\linewidth]{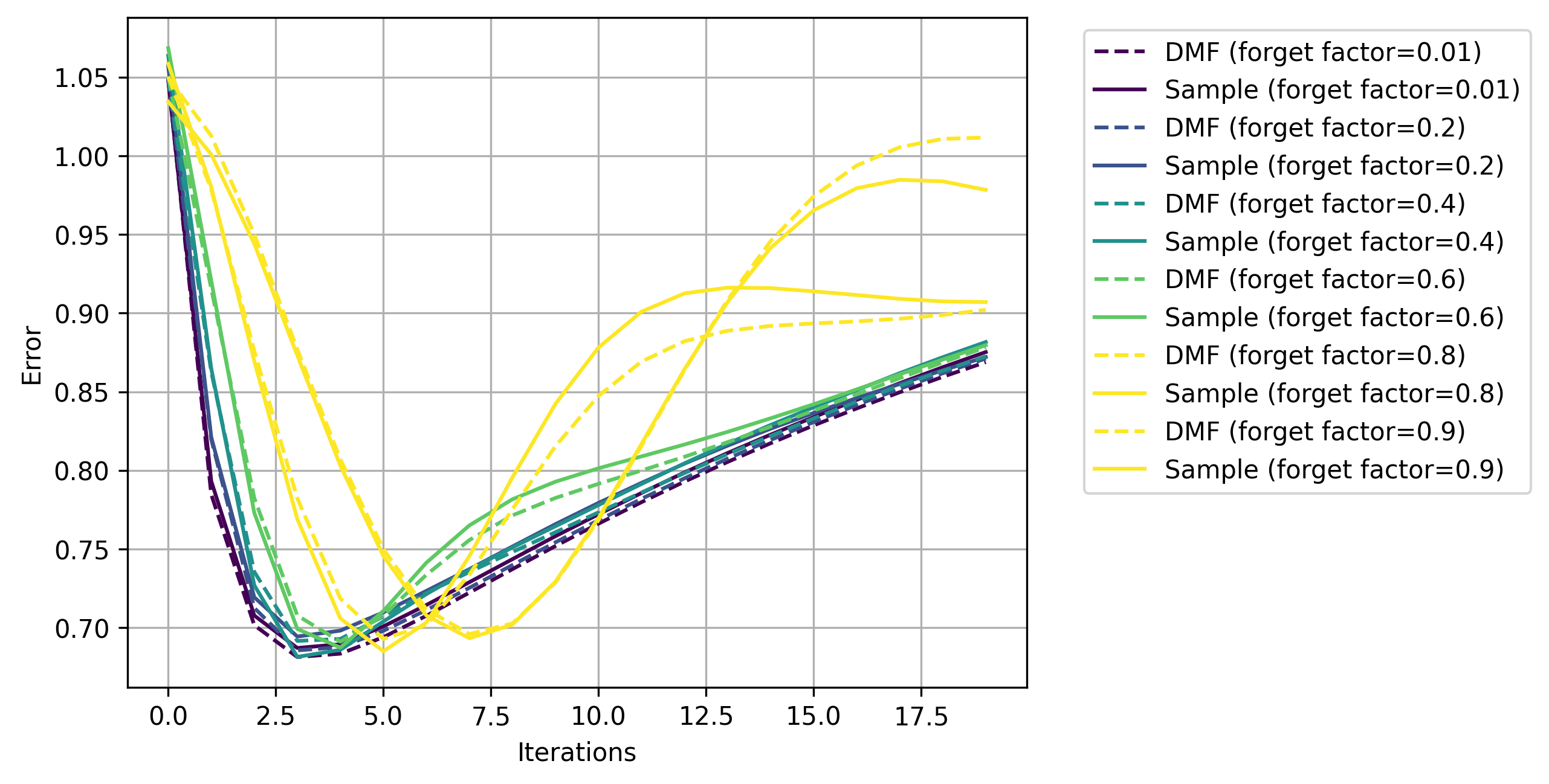}
    \caption{$m=1000$}
    \label{fig:moment_fluct}
\end{subfigure}
\caption{Training error for momentum gradient descent ($t=.2$) with varying forgetting factor $s$ and two classes with $\rho(y)=.5,\ \gamma=1,\ \|\theta_0\|=.1$, and overlaps: $v(y,y)=1$ and coupling $v(0,1)=-.5$.}
\end{figure}
\subsection{Emergence of Fluctuation Parameters}\label{sec:DMF1_correction}
Finally, we refine the DMF results by a single iteration of Algorithm \ref{alg:iterative}, where we need to recalculate the parameters by the solution of DMF, turning each parameter to a random variable. Our goal is to show that the statistics, up to $O(1/m)$, will depend on  a finite set of additional \emph{fluctuation parameters}. Therefore, we may also calculate an approximation of the parameters that match them only in first and second order statistics (mean and covariance matching), as these parts are the only expressions with $O(1/m)$ content.  
We refer to the updated values by primed parameters, 
and the matched terms
by double primes. 

First, we observe that $V'_\theta$ and hence $A'_\theta$ will only depend on $\bG_y$s, while  $V'_\omega(y)$ and hence $A'_\omega(y)$  only depend on $\bH_y$. Consequently, the updated values $\tlbG'$ and $\{\tilde\bH'_y\}$ of $\tlbG$ and $\{\tilde\bH_y\}$  still remain Gaussian as in \eqref{eq:DMF1_Gaussians}, when conditioned on $V'_\theta, V'_\omega(y)$. Hence, the refined statistics is completely identified with updated variables $V'_\theta=\bar\bTheta^T\bar\bTheta,\ V'_\omega(y)=\nicefrac 1m\bar\bOmega_y^T\bar\bOmega_y,\ \beta'(y)=\hat x^T(y)\bar\bTheta,\  \alpha'(y)=\nicefrac{\bone^T\bar\bOmega_y}m,$ and $C'_\omega(y):=\nicefrac{\bH_y^T\bar\bOmega_y}m$ and $C'_\theta(y):=\nicefrac{\bG_y^T\bar\bTheta} {\sqrt{m}}$, together with additional random variables $z\bGamma$. We may also take $\bar C_\omega(y)=\bar A_\theta\bar B_\theta(y),\  \bar C_\omega(y)=\bar A_\omega(y)\bar B_\omega$.  Then, we define the following:
\begin{defi} We introduce the fluctuation parameters $g_e^T(y)=\nicefrac 1{\sqrt{n}}\bone^T\bG_y,\ g_o^T(y,y')=\hat x^T(y')\bG_y$ and $h_e^T(y)=\nicefrac 1{\sqrt{m\rho(y)}}
\bone^T\bG_y$. Moreover, we take $\omega_e(l,y):=\omega(p_e(\mu;y),y)$ where $p_e(l,y)$ is generated by \eqref{eq: characteristic} when $\tlh(y)$ is replaced by the elements of  $\tlh^T_e(y)=h^T_e(y)\bar A_\theta$. For simplicity, we also define
    $\tlg^T_o(y)=\hat x^T(y)\tlbG\tilde\bLambda,\ \tlg^T_e=\frac 1{\sqrt{n}}\bone^T\tlbG\tilde\bLambda$, which purely depend on $g_e(y),g_o(y)$.
\end{defi}
Our final result shows that the statistics of the refined dynamics will purely depend on the above fluctuation parameters. This result can be verified by direct calculation:
\begin{prop}
    The recalculated matrices  have an identical first and second order (joint) statistics with $\{\bG_y,\bH_y\}$ to $\beta''(y)=\bar\beta(y)-\frac 1{\sqrt{m}}\tlg_o(y),\ \alpha''(y)=\bar\alpha(y)+\sqrt{\frac{\rho(y)}{m}}(w_e(y)-\frac 1{\rho(y)}\bar\alpha(y))$ and
    \begin{eqnarray}
    &{\small V''_\theta=\bar V_\theta+\frac 1{\sqrt m}\left(\left(\suml_{y\in\calY^*}\hat x(y)\tlg^T_o(y)+\tlg_o(y)\hat x^T(y)\right)+\sqrt{\gamma}\left(\tlg_e\tlg^T_e-\bar V_\theta\right)\right)},\nwl
        &V''_\omega(y)=\bar V_\omega(y)+\sqrt{\frac {\rho(y)}m} \left(\omega_e(y)\omega_e^T(y)-\frac 1{\rho(y)}\bar V_\omega(y)\right)
        \nwl &C''_\omega(y)=\bar C_\omega(y)+\sqrt{\frac{\rho(y)}{m}}\left(h_e(y)\omega_e^T(y)-\frac 1{\rho(y)}\bar C_\omega(y)\right) 
        \nwl
        &
        C''_\theta(y)=\bar C_\theta(y)-\frac 1{\sqrt{m}}\left(\suml_{y'}g_o(y,y')\tilde\alpha^T(y')\omega_e^T(y)+g_e(y)\tlg_e^T-\bar C_\omega(y)\right),
    \end{eqnarray}
\end{prop}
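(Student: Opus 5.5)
The plan is to compute, directly from the DMF solution, each recalculated matrix as a function of the Gaussian matrices $\{\bG_y,\bH_y\}$. Then I expand its mean and its covariance with $\{\bG_y,\bH_y\}$, keeping only terms of order up to $1/\sqrt m$ in the matrix itself, which is $O(1/m)$ in second-order statistics. Finally I compare these with the proposed double-primed expressions.

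The starting point is the DMF representation already derived in this section: $\bar\bTheta=\bar\Theta_\td-\frac 1{\sqrt m}\tlbG\tilde\bLambda$, with $\bar\Theta_\td=-\sum_{y\in\calY^*}\hat x(y)\tilde\balpha^T(y)$. On the dual side, the rows $\bar\omega_i$ with $y_i=y$ are i.i.d. copies of $\omega(\bar p_y(\cdot),y)$ generated by the characteristic dynamics \eqref{eq: characteristic}, driven by rows of $\bH_y\bar A_\theta$.

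The $\theta$-side quantities are handled by substitution.
\begin{itemize}
\item Substituting into $\beta'(y)=\hat x^T(y)\bar\bTheta$ gives the deterministic part $\bar\beta(y)$ plus the linear Gaussian term $-\frac1{\sqrt m}\tlg_o(y)$. This is exact, so no matching is needed.
\item For $V'_\theta=\bar\bTheta^T\bar\bTheta$, expanding the square gives three pieces. The deterministic part is $\bar\Theta_\td^T\bar\Theta_\td$. The cross terms are linear in $\tlbG$, and these produce the $\hat x\,\tlg_o^T$ terms. The quadratic term is $\frac1m\tilde\bLambda^T\tlbG^T\tlbG\tilde\bLambda$, a Wishart-type matrix. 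I will show that its mean and covariance with $\bG_y$ coincide with those of $\gamma\tilde\bLambda^T\bar V_\omega\tilde\bLambda+\sqrt{\gamma/m}\,(\tlg_e\tlg_e^T-\bar V_\theta)$ up to the retained order. The point is that a sum of $n$ i.i.d. rank-one terms has the same first two moments as its mean plus one rescaled rank-one fluctuation built from $\frac1{\sqrt n}\bone^T\bG_y$.
\item $C'_\theta(y)=\bG_y^T\bar\bTheta/\sqrt m$ is treated the same way. Its linear part is $\bG_y^T\hat x(y')$, which gives $g_o$. Its quadratic part is $\bG_y^T\tlbG$, whose mean is fixed by Stein's identity and gives $\bar C$, and whose fluctuation is matched by $g_e\tlg_e^T$.
\end{itemize}

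The $\omega$-side quantities $V'_\omega(y)$, $\alpha'(y)$ and $C'_\omega(y)$ are empirical averages over the $m\rho(y)$ i.i.d. rows.
\begin{itemize}
\item Each is a mean of the form $\frac1m\sum_{i:y_i=y} F(\bh_i)$ with $F$ a fixed function of a single Gaussian row.
\item The key lemma is that such a sum has the same mean and the same covariance with the Gaussian inputs as $\rho(y)\bbE F+\sqrt{\rho(y)/m}\,(F(h_e)-\bbE F)$. Here $h_e$ is a single surrogate row, $\frac1{\sqrt{m\rho(y)}}\bone^T\bH_y$, and the variance is rescaled.
\item This yields $\omega_e$, $\omega_e\omega_e^T$ and $h_e\omega_e^T$. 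The centering by $\frac1{\rho(y)}\bar V_\omega(y)$ compensates for the mean of the surrogate term.
\end{itemize}

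The main obstacle is making the matching of ``joint statistics with $\{\bG_y,\bH_y\}$'' precise. The surrogates are nonlinear in $h_e$, so their cross-covariance with individual entries of $\bH_y$ must agree with that of the empirical average. I would verify this through Stein's identity, reducing it to the equality of $\bbE[\partial F]$ in both cases. I would also need to check that the $O(1/m)$ remainders, which come from the $\bar A$ normalizations and from the higher moments of the rank-one surrogates, are dropped consistently.
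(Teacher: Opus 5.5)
Your plan is correct and follows the same route as the paper, which only says the result can be verified ``by direct calculation'': split the $\theta$-side quantities into constant, linear and row-wise quadratic parts in the $\bG_y$, and treat the $\omega$-side quantities as i.i.d. row averages in $\bH_y$. Then match means and covariances, including cross-covariances with the entries of $\bG_y,\bH_y$ via Stein's identity, by a single rescaled copy built from $\bone^T\bG_y/\sqrt n$ or $\bone^T\bH_y/\sqrt{m\rho(y)}$.

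Two cautions on carrying it out. First, this matching is exact, so there are no remainders to track. Second, the computation corrects the printed $\theta$-side formulas rather than reproducing them: the cross terms in $V''_\theta$ are $\tilde\balpha(y)\tlg_o^T(y)+\tlg_o(y)\tilde\balpha^T(y)$; the term $\tlg_e\tlg_e^T$ must be centered at its own mean $\tilde\bLambda^T\barV_\omega\tilde\bLambda$, since centering at $\barV_\theta$ as in your stated target leaves an $O(1/\sqrt m)$ mean mismatch; and in $C''_\theta(y)$ the term $g_e(y)\tlg_e^T$ carries a factor $\sqrt\gamma$, while the $g_o$ term has no factor $\omega_e^T(y)$.
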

Figure \ref{fig:fluct} depicts the variance of the deviation from the DMF expressions, in a scenario with $m=n=2000$, where the first order terms of $O(1/m)$ dominate the statistics. Hence, the variance is normalized by multiplying to $m$. To eliminate $z$, we calculate the desired statistics $H(z)$ at $z=0,1$ and approximate $H(z=\sqrt{-1})$ by $2H(0)-H(1)$. We observed that this approach does not provide accurate result for the exact ReLU function, perhaps due to non-differentiability, which does not allow analytic extension to $z=\sqrt{-1}$. Instead, we use a differentiable approximation of ReLU (called soft ReLU) in the form of $\sigma(p)=\log(1+e^p)$, exhibiting a close relation between the results obtained by the original and alternative dynamics. Indeed due to the higher order terms involved in the approximation of $H(z=\sqrt{-1})$, the two curves can slightly differ.

\section{Literature review}



\begin{description}[style=unboxed,leftmargin=0cm]
    \item[Asymptotic Regimes] The theoretical understanding of training dynamics has emerged as a major direction in modern machine learning research. Despite limited studies in finite dimensions and through the lens of dynamical systems   
    \cite{saxe2013exact,wibisono2016variational, oymak2020toward, soltanolkotabi2018theoretical}, 
    a major breakthrough came only from the study of large models in asymptotic limits \cite{advani2020high}. Most notably, the Neural Tangent Kernel (NTK) theory analyzes networks in learning scales, where parameters remain close to initialization
    \cite{jacot2018neural, arora2019exact, lee2022neural, cao2019generalization}
    , allowing a simple description of the training dynamics by data kernels. As such,
    NTK primarily models a “lazy training” regime with limited feature evolution \cite{allen2019can, hanin2019finite, li2020learning}.
    A natural extension of NTK is in
    the mean-field regime,  where training can be described as the evolution of a probability measure over parameters
    \cite{mei2018mean, mei2019mean, chizat2018global, rotskoff2018neural}, better capturing feature learning and nonlinear parameter movement. A great advantage of this regime is that it admits statistical physics approaches. Relying on earlier parallels between  artificial NNs and physics, particularly spin glass theory
    \cite{amit1985spin, krauth1987learning, personnaz1985information}, dynamical extensions, such as state evolution and dynamical mean-field theory (DMFT) have gained a lot of interest and 
    provide recursive equations governed by a few order parameters for macroscopic observables during learning 
    \cite{gerbelot2024rigorous, mignacco2020dynamical, bordelon2022self, bordelon2023dynamics, bordelon2026disordered, fan2025dynamical}
    . Although these expressions often match the experimental results, many of them remain mathematically unproven and cannot be generalized to finite regimes. 

    \item[Gaussian Comparison] A more recent rigorous direction leverages Gaussian comparison principles, such as Slepian’s lemma \cite{slepian1962one}, Gordon’s min–max theorem \cite{gordon2006milman}, and related Gaussian process inequalities \cite{thrampoulidis2014gaussian}. 
    These tools allow complex random optimization problems to be compared with simpler Gaussian surrogates, 
    especially combined with high-dimensional probability results, such as universality\cite{thrampoulidis2018precise, panahi2017universal, bosch2023precise}. However, Gaussian comparison is limited to the global solutions of convex optimization problems. Our paper generalizes this methodology to the analysis of dynamical systems.
\end{description}
\begin{figure}
    \centering
    \includegraphics[width=0.6\linewidth]{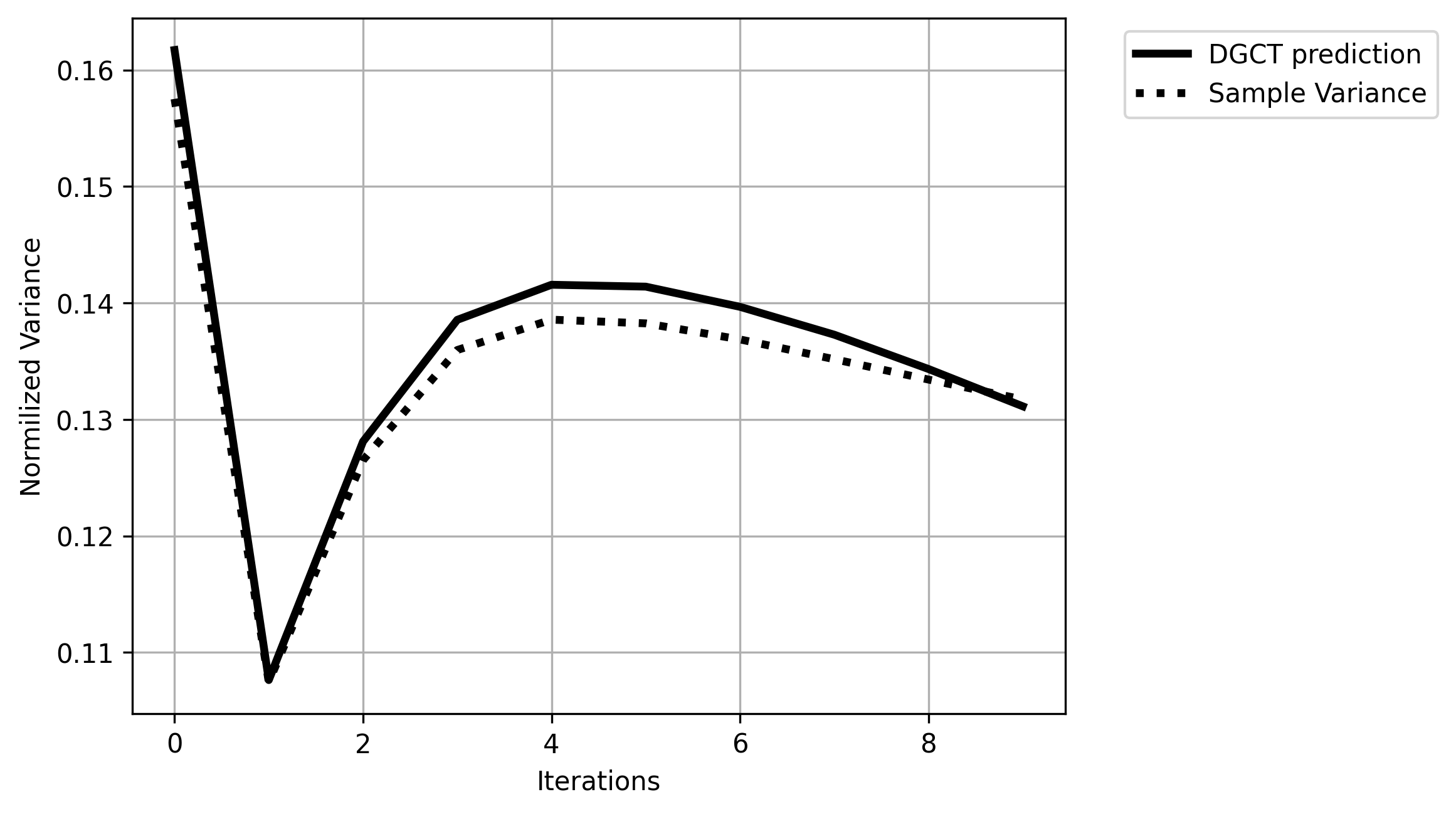}
    \caption{Normalized variance of training a perceptron with soft ReLU function, and two classes with $\rho(y)=.5,\ \gamma=1,\ \|\theta_0\|=.1$, and overlaps: $v(y,y)=1$ and coupling $v(0,1)=-.5$. The empirical values are calculated by averaging over $10^5$ realizations.}
    \label{fig:fluct}
\end{figure}
\section{Concluding remarks}
In Theorem \ref{theorem:main}, we establish a deep connection between the dynamics of training and an alternative process based on time kernels $A_\omega,A_\theta, B_\omega, B_\theta$, depending on the solution overlaps in $V_\omega,V_\theta$, as well as the bias terms $e,\beta$. This relation particularly proves the validity of the DMF limit, but more generally, shows that a similar relation holds in finite dimensions, but with random kernels. Studying finite dimensions requires eliminating the perturbation in  $\phi'$ to arrive at $\phi$, which we 
conjecture in Claim \ref{thm:claim}.  
This is the basis of our approximation scheme in Algorithm
\ref{alg:iterative}. In this scheme, we expect after $r$ iterations, the accuracy of the approximation to be improved with $O(1/m^r)$. 
The refined analysis in Section \ref{sec:DMF1_correction} agrees with this intuition as the correction terms are centered and of $O(1/\sqrt{m})$, which have only an effect of $O(1/m)$ in the statistics. In the example studied here, the correction may have a negligible effect, but when the size of kernels $A_\ldotp$ grow (e.g  by considering $J=O(m)$ or SGD scenarios where $L=O(m)$), then higher order terms become dominant and DMFT may fail. This phenomenon is an interesting direction of future study. Finally, we note that 
our theorem may exhibit a universal behavior beyond Gaussian mixtures.
\bibliographystyle{acm}
\bibliography{bibfile}

\newpage

\appendix
\section{Proof of Gordon's Lemma (Theorem \ref{lem:gordon})}\label{sec:proof_gordon}
 The proof of Theorem \ref{lem:gordon} is based on the evolution of $f$ along the interpolation path $\rho_t:=\sqrt{t}\phi+\sqrt{1-t}\psi$ between $\phi,\psi$.  To study this evolution, we introduce the following version of Stein's identity:
\begin{lemma}\label{lem:Stein}
    Take a Banch space $(V,\|\ldotp\|)$, a bounded, continuously Fr\'echet- differentiable function $v:V\to V^*$ with bounded derivative, and a centered $V-$valued Gaussian  variable $\rho$. Then, we have
     \begin{equation}
         \bbE[v(\rho)[\rho]]=\bbE[Dv(\rho)[\rho',\rho']],
     \end{equation}
     where $\rho'$ is an independent copy of $\rho$.
     \begin{proof}
         We take a sequence of continuous operators $E_n: V\to V$ with a finite-dimensional range such that 
         \begin{eqnarray}
             \rho=\liml_{n\to\infty} E_n\rho,
         \end{eqnarray}
         almost surely. In other words, $\{E_n\}$ is a finite-dimensional approximation of identity w.r.t. $\rho$. For  construction, see \cite{croix2017karhunen}. Now, we define $\rho_n=E_n\rho$ and $\rho'_n=E_n\rho'$, which are finite dimensional, centered Gaussian variables. Then according to the Stein's identity in finite dimensions (i.e. integration by parts), we have
         \begin{equation}
            \bbE[v(\rho_n)[\rho_n]]=\bbE[Dv(\rho_n)[\rho_n',\rho_n']].
         \end{equation}
         Letting $n\to\infty$ and using the dominated convergence theorem provides the result.
     \end{proof}
     
\end{lemma}
To prove Gordon's lemma, we note that under the conditions 
\begin{eqnarray}
    \bbE[f(\psi)]-\bbE[f(\phi)]=\intl_0^1\left[\frac 1{2\sqrt{t}}\bbE Df(\rho_t)[\phi]-\frac 1{2\sqrt{1-t}}\bbE Df(\rho_t)[\psi]\right]\td t,
\end{eqnarray}
where we use Fubini's theorem to exchange expectation and integration w.r.t. $t$. Next, we invoke Stein's lemma in \ref{lem:Stein} for $v(\phi)=Df(\sqrt{t}\phi+\sqrt{1-t}\psi)$ (for fixed $t,\psi$) and $v(\psi)=Df(\sqrt{t}\phi+\sqrt{1-t}\psi)$ (for fixed $t,\phi$),  in the above, and conclude that

\begin{eqnarray}
    \bbE[f(\psi)]-\bbE[f(\phi)]=\frac 12\intl_0^1\left[\bbE D^2f\left(\rho_{t}\right)[\phi',\phi']-\bbE D^2f\left(\rho_{t}\right)[\psi',\psi']\right]\td t.
\end{eqnarray}
We conclude the result by noting the assumptions and the fact that $\rho_t,\phi',\psi'$ are independent.

\section{Verifying Conditions of Theorem \ref{thm:compare_equal} for Theorem \ref{theorem:main}}\label{sec:verify_conditions}
Now, we verify the conditions of theorem \ref{thm:compare_equal} for $\phi_{\sigma,z}=\phi'$ and $\psi_{\sigma,z}=\psi$, given in \eqref{eq:perturbabtion} and \eqref{eq:alternative}, respectively and explicitly denoted by $\sigma,z$, for clarity.  
\subsection{Structure of Jacobian}
Our first step is to identify the structure of $\calM$. For this reason, we take $\rho_t:=\sqrt{t}\phi_{\sigma,z}+\sqrt{1-t}\psi_{\sigma,z}+\rho_0$ and note that the equation $\rho_t=0$ can be written as 
\begin{eqnarray}\label{eq:q_p}
    q(l)=q_l\left(\{\omega(l')\}_{l'=0}^{l}, \{\theta(l')\}_{l'=0}^{l}\right),\quad p(l)=p_l\left(\{\omega(l')\}_{l'=0}^{l-1}, \{\theta(l')\}_{l'=0}^{l}\right)
\end{eqnarray}
where
\begin{eqnarray}\label{eq:zero_as_recursion}
            &q_l=\frac 1m\bX_t\omega(l)+\sqrt{t(1+z^2)}\suml_\zeta
            \suml_{\mu\leq l}R(_\zeta )\tltheta_\zeta(\mu)\left(\frac{\bgamma_\zeta^T(\mu)\ba_\omega(l;\zeta )}{\sqrt{m}}\right)+\sqrt{1-t}\times
            \nwl
            &\suml_\zeta\left[\suml_{\mu}
            R^{\frac 1 2}(\zeta )\left(\frac{\bg_\zeta(\mu)}{\sqrt{m}}\right)a_\omega(\mu,l; \zeta )+\suml_{\mu\leq l}R(\zeta )\tltheta_\zeta(\mu)\left(\frac{\bh_\zeta^T(\mu)\omega_\zeta(l)}m+\frac{\sigma w_\zeta(\mu,l)+z\bgamma^{'T}_\zeta(\mu)\ba_\omega(l;\zeta )}{\sqrt{m}}\right) \right],\nwl
            & p_l=\bX^T_t\theta(l)+\sqrt{t}\left[\suml_{\mu< l}\tlomega_\zeta(\mu)\left(\frac{\blambda_\zeta^T(\mu)\ba_\theta(l;\zeta )}{\sqrt{m}}\right)\right]_{y_i=y}+
            \nwl
            &\sqrt{1-t}\left[\suml_{\mu}\bh_\zeta(\mu) a_\theta(\mu,l; \zeta )+ 
            \suml_{\mu< l}\tlomega_\zeta(\mu)\frac{\bg_\zeta^T(\mu)R^{\frac 12}(\zeta )\theta(l)+\sigma w_\zeta(l,\mu)+z\blambda^{'T}_\zeta(\mu)\ba_\theta(l;\zeta )}{\sqrt{m}}\right]_{y_i=y},
        \end{eqnarray}
        where $\bX_t=\sqrt{t}\tlbX+\bM$ 
        and $\tltheta_\zeta(l),\tlomega_\zeta(l)$ are block columns of $\bTheta A^{-1}_\theta(\zeta ),\ \bOmega_\zeta A^{-1}_\omega(\zeta )$, respectively. Remember that $\omega_\zeta(l)$ denotes the block sub-matrix of $\omega(l)$ corresponding to $\zeta_i=\zeta$. 

        We note that \eqref{eq:q_p} and \eqref{eq:theta_omega} define a recursive relation (with the update order of $\theta\to p\to\omega\to q$). Hence, there exists a unique solution to $\rho_t=0$. To calculate the inverse of the Jacobian $M$ of $\rho_t$, we
        calculate $M\td\eta$, where $\td\eta=\{[\td q^T_\eta(l)\ \td p^T_\eta(l)]^T\}_l$ is an arbitrary vector, by solving $\td\rho=\td\eta$. This leads to the following relation:
\begin{eqnarray}
    &\td q(l)=\td q_\eta(l)+\suml_{\mu\leq l}\frac{\partial q_l}{\partial \omega (\mu)}(\td\omega(\mu))+
    \suml_{\mu\leq l}\frac{\partial q_l}{\partial \theta (\mu)}(\td\theta(\mu))\nwl
    &
    \td p(l)=\td p_\eta(l)+\suml_{\mu< l}\frac{\partial p_l}{\partial \omega (\mu)}(\td\omega(\mu))+
    \suml_{\mu\leq l}\frac{\partial p_l}{\partial \theta (\mu)}(\td\theta(\mu))\nwl
    &\td\omega(l)=\suml_{\mu\leq l}\frac{\partial \omega_l}{\partial p(\mu)}(\td p(\mu))+
    \suml_{\mu< l}\frac{\partial \omega_l}{\partial q(\mu)}(\td q(\mu))\nwl
    &\td\theta(l)=\suml_{\mu< l}\frac{\partial \theta_l}{\partial p(\mu)}(\td p(\mu))+
    \suml_{\mu< l}\frac{\partial \theta_l}{\partial q(\mu)}(\td q(\mu))
\end{eqnarray}
which is again in a recursive form (with order $\td\theta\to\td p\to\td\omega\to\td q$). Hence $M$ exists and can be written as
\begin{eqnarray}\label{eq:set_M}
    M\td\eta=\left[\begin{array}{c}
         \suml_{\mu\leq l} M^{\rmq\rmq}_{l \mu }\td \left(q_\eta(\mu)\right)+ \suml_{\mu\leq l} M^{\rmq\rmp}_{l \mu}\left(\td p_\eta(\mu)\right)  \\
         \suml_{\mu< l} M^{\rmp\rmq}_{l \mu}\left(q_\eta(\mu)\right)+ \suml_{\mu\leq l} M^{\rmp\rmp}_{l \mu}\left(\td p_\eta(\mu)\right)
    \end{array}\right]_l
\end{eqnarray}
we take $\calM$ as the set of all such operators with an arbitrary choice of linear operators $M^{\rmq\rmq}_{l \mu }:\bbR^{n\times J}\to \bbR^{n\times J}\ , M^{\rmp\rmq}_{l \mu }:\bbR^{n\times J}\to \bbR^{m\times J}\ , M^{\rmp\rmp}_{l \mu }:\bbR^{m\times J}\to \bbR^{m\times J}$ and  $ M^{\rmq\rmp}_{l \mu }:\bbR^{m\times J}\to \bbR^{n\times J}$. Then, the above argument shows that condition 1 in Theorem \ref{thm:compare_equal} holds true.


\subsection{Moment matching conditions}
Now, we check the second condition of theorem \ref{thm:compare_equal}, which consists of two parts in \eqref{eq:matching_gordon}. For this reason, we introduce the notation
\begin{eqnarray}
    \psi=\left[\begin{array}{c}
         q_\psi(l) \\
         \left[p_{\psi,\zeta}(l)\right]_{y_i=\zeta}
    \end{array}\right]_{l},\quad \phi=\left[\begin{array}{c}
         q_\phi(l) \\
         \left[p_{\phi,\zeta}( l)\right]_{\zeta_i=\zeta}
    \end{array}\right]_{l} 
\end{eqnarray}
We may verify the first part of \eqref{eq:matching_gordon} by direct calculation, which gives the following:
\begin{eqnarray}
    &\bbE[q_\psi(l)\otimes q_\psi(l')]=\bbE[q_\phi(l)\otimes q_\phi(l')]=\nwl
    &\frac 1m\suml_\zeta V_\omega(l,l';\zeta )\otimes\left[R(\zeta )+(1+z^2)R(\zeta )\suml_{\mu\leq l,\ \mu\leq l'}\tltheta_\zeta(\mu)\tltheta^T_\zeta(\mu)R(\zeta )\right],\nwl
    &\bbE[q_\psi(l)\otimes p_{\psi,\zeta}(l')]=\bbE[q_\phi(l)\otimes p_{\phi, \zeta}(l')]=\nwl
    &\left\{\begin{array}{cc}
    \frac{1}{m}[R(\zeta )\theta(l')]\otimes\left[\omega_\zeta(l)+(1+z^2)\suml_{\mu<l'}a^T_\omega(\mu,l; \zeta )\tlomega^T_\zeta(\mu)\right] & l'\leq l \\
        \frac 1m\left[R(\zeta )\theta_\zeta(l')+(1+z^2)R(\zeta )\suml_{\mu\leq l}\tltheta(\mu)a_\theta(\mu,l'; \zeta )\right]\otimes \omega_\zeta(l) & l< l
    \end{array}\right.
    \nwl&\bbE[p_{\psi,\zeta}(l)\otimes p_{\psi, \zeta'}(l')]=\bbE[p_{\phi,\zeta}(l)\otimes p_{\phi,\zeta'}(l')]=
    \nwl &V_\theta(l,l';\zeta )\otimes\left[\bI+\frac {1+z^2}m\suml_{\mu< l,\ \mu< l'}\tlomega_\zeta(\mu)\tlomega^T_\zeta(\mu)\right]\delta_{\zeta,\zeta '},
\end{eqnarray}
where $\otimes$ denotes tensor multiplication under suitable order of indices. To check the second part of \eqref{eq:matching_gordon}, we first note that for any realization $\rho$ of $\psi$ or $\phi$, written as
\begin{eqnarray}
    \rho=\left[\begin{array}{c}
         q_\rho(l) \\
         p_{\rho}(l)
    \end{array}\right]_l 
\end{eqnarray}
and any $M\in\calM$ as defined in \eqref{eq:set_M}, we have 
\begin{eqnarray}\label{eq:matching_template}
    \frac{\partial \rho}{\partial\xi} M\rho=\left[\begin{array}{c}
         \suml_{\mu<l}\frac{\partial q_\rho(l)}{\partial q(\mu)}
         \left(\suml_{\nu\leq \mu} M^{\rmq\rmq}_{\mu \nu }q_\rho(\nu)+ \suml_{\nu\leq \mu} M^{\rmq\rmp}_{\mu \nu}p_\rho(\nu)\right)
         
           \\
          \suml_{\mu<l}\frac{\partial p_\rho(l)}{\partial q(\mu)}
         \left(\suml_{\nu\leq \mu} M^{\rmq\rmq}_{\mu \nu }q_\rho(\nu)+ \suml_{\nu\leq \mu} M^{\rmq\rmp}_{\mu \nu}p_\rho(\nu)\right)
         
    \end{array}\right]_l+\nwl
    \left[\begin{array}{c}
         \suml_{\mu<l}\frac{\partial q_\rho(l)}{\partial q(\mu)}
         \left(\suml_{\nu\leq \mu} M^{\rmq\rmq}_{\mu \nu }q_\rho(\nu)+ \suml_{\nu\leq \mu} M^{\rmq\rmp}_{\mu \nu}p_\rho(\nu)\right)
         
           \\
          \suml_{\mu<l}\frac{\partial p_\rho(l)}{\partial p(\mu)}
         \left(\suml_{\nu< \mu} M^{\rmp\rmq}_{\mu \nu}q_\rho(\nu)+ \suml_{\nu\leq \mu} M^{\rmp\rmp}_{\mu \nu}p_\rho(\nu)\right)
    \end{array}\right]_l
\end{eqnarray}
This expression allows us to verify the required condition by matching each of the terms involved. Note that each term depends on a second order moment of any of the following forms $\bbE[q_\rho(\nu)\otimes\td q_\rho(l)],\ \bbE[q_\rho(\nu)\otimes\td p_\rho(l)],\ \bbE[p_\rho(\nu)\otimes\td q_\rho(l)]$ and $\bbE[p_\rho(\nu)\otimes\td p_\rho(l)]$, where $\td q_\rho(l)$ and $\td p_\rho(l)$ are the differentials with respect to either $\td q(\mu)$ or $\td p(\mu)$. Due to tedious but straightforward calculations, we do not present all cases, but the ones demonstrating the complexities involved. Let us first consider $\bbE[q_\rho(\nu)\otimes\td q_\rho(l)]$. For $\rho=\phi$, we have 
\begin{eqnarray}\label{eq:matching_qq_phi}
    &\bbE[q_\phi(\nu)\otimes\td q_\phi(l)]=\frac 1m\suml_\zeta  R(\zeta )\otimes\left[\frac{\omega_\zeta ^T(\nu)\td\omega_\zeta (l)}m\right] + (1+z^2)\times
    \nwl
    &\left[\left(\suml_{\alpha\leq \nu\ \alpha< l}R(\zeta )\tltheta_\zeta (\alpha)\tltheta^T_\zeta (\alpha)R(\zeta )\right)\otimes(\ba^T_\omega(\nu; \zeta )\td\ba_\omega(l; \zeta ))+\right.\nwl &\left.\left(\suml_{\alpha\leq \nu\ \alpha< l}R(\zeta )\tltheta_\zeta (\alpha)\td\tltheta^T_\zeta (\alpha)R(\zeta )\right)\otimes V_\omega (\nu,l; \zeta )\right] 
\end{eqnarray}
Similarly, we achieve for $\rho=\psi$:
\begin{eqnarray}\label{eq:matching_qq_psi}
    &\bbE[q_\psi(\nu)\td q^T_\psi(l)]=\frac 1m\suml_\zeta  R(\zeta )
    \otimes(\ba^T_\omega(\nu; \zeta )\td\ba_\omega(l; \zeta ))
    + 
    \nwl
    & \suml_\zeta \left[\left(\suml_{\alpha\leq \nu\ \alpha< l}R(\zeta )\tltheta_\zeta (\alpha)\tltheta^T_\zeta (\alpha)R(\zeta )\right)\otimes
    \left(\frac{\omega_\zeta ^T(\nu)\td\omega_\zeta (l)}m\right)
    +\right.\nwl
    &\left.(1+z^2)\left(\suml_{\alpha\leq \nu\ \alpha< l}R(\zeta )\tltheta_\zeta (\alpha)\td\tltheta^T_\zeta (\alpha)R(\zeta )\right)\otimes V_\omega (\nu,l; \zeta )\right]+
    \nwl
    &z^2\left[\left(\suml_{\alpha\leq \nu\ \alpha< l}R(\zeta )\tltheta_\zeta (\alpha)\tltheta^T_\zeta (\alpha)R(\zeta )\right)\otimes(\ba^T_\omega(\nu; \zeta )\td\ba_\omega(l; \zeta )) \right]
\end{eqnarray}
Note that the expressions in \eqref{eq:matching_qq_phi} and \eqref{eq:matching_qq_psi} do not generally match. However, we notice that there are two corresponding terms in \eqref{eq:matching_template} where $\td q_\rho(l)$ accord with $\td q(\mu)$, with $\nu\leq\mu<l$ and $\td p(\mu)$ with $\nu<\mu\leq l$. In both cases, we observe that $\ba_\omega(\nu; \zeta )$ and $\omega_\zeta (\nu)$ are independent of the variable with respect of which the differentiation is taken (i.e. $q(\mu)$ or $p(\mu)$) and hence we have
\begin{eqnarray}
    \ba^T_\omega(\nu; \zeta )\td\ba_\omega(l; \zeta )=\td(\ba^T_\omega(\nu; \zeta )\ba_\omega(l; \zeta ))=\td V_\omega(\nu,l; \zeta )=\frac 1m\td(\omega_\zeta ^T(\nu)\omega_\zeta (l))=\frac 1m\omega_\zeta ^T(\nu)\td\omega_\zeta (l)
\end{eqnarray}
and hence the terms match. Next, let us study the term $\bbE[q_\rho(\nu)\otimes\td p_\rho(l)]$. For $\rho=\phi$, we have
\begin{eqnarray}
    &\bbE[q_\phi(\nu)\otimes\td p_{\phi,\zeta}(l)]=\nwl
    &\frac 1m\left[R(\zeta )\td\theta(l)\right]\otimes\omega_\zeta (\nu)+\frac {1+z^2}m\left(R(\zeta )\suml_{\alpha\leq\nu}\tltheta_\zeta (\alpha)\td a_\theta(\alpha, l;\zeta )\right)\otimes\left(\suml_{\beta<l}\tlomega_\zeta (\beta)a_\omega(\beta, \nu)\right)+\nwl
    &
    \frac {1+z^2}m\left(R(\zeta )\suml_{\alpha\leq\nu}\tltheta_\zeta (\alpha) a_\theta(\alpha, l;\zeta )\right)\otimes\left(\suml_{\beta<l}\td\tlomega_\zeta (\beta)a_\omega(\beta, \nu)\right)
\end{eqnarray}
where $p_{\phi,\zeta }(l)$ is the block of $p_\phi(l)$ with $y_i=y$. For $\rho=\psi$, we get
\begin{eqnarray}
    &\bbE[q_\psi(\nu)\otimes\td p_{\psi,\zeta }(l)]=
    \nwl
    &\frac 1m
    \left[\left(R(\zeta )\td\theta(l)\right)\otimes\left(\suml_{\beta<l}\tlomega_\zeta (\beta)a_\omega(\beta, \nu; \zeta )\right)
    +\left(R(\zeta )\theta(l)\right)\otimes\left(\suml_{\beta<l}
    \td\tlomega_\zeta (\beta) a_\omega(\beta, \nu; \zeta )\right)
    \right]
    +\nwl
    &
    \frac 1m\left(R(\zeta )\suml_{\alpha\leq\nu}\tltheta_\zeta (\alpha)\td a_\theta(\alpha, l; \zeta )\right)\otimes\omega_\zeta (\nu)+\nwl
    &\frac {z^2}m\left(R(\zeta )\suml_{\alpha\leq\nu}\tltheta_\zeta (\alpha)\td a_\theta(\alpha, l;\zeta )\right)\otimes\left(\suml_{\beta<l}\tlomega_\zeta (\beta)a_\omega(\beta, \nu)\right)+\nwl
    &
    \frac {z^2}m\left(R(\zeta )\suml_{\alpha\leq\nu}\tltheta_\zeta (\alpha) a_\theta(\alpha, l;\zeta )\right)\otimes\left(\suml_{\beta<l}\td\tlomega_\zeta (\beta)a_\omega(\beta, \nu)\right)
\end{eqnarray}
Again, the two expressions do not match, but we note that the two corresponding terms in \eqref{eq:matching_template} are associated with $dq(\mu)$ for $\nu\leq\mu< l$ and $dp(\mu)$ for $\nu<\mu< l$. Now, we make two observations. First in both cases, we are only interested in terms with $\nu<l$ and since $a_\omega(\beta, \nu)=0$ for $\beta>\nu$, the summation limit $\beta<l$ can  be eliminated. Second, we note that $a_\omega(\beta, \nu)=0$ for $\beta>\nu$ and for $\beta\leq\nu$ the vector $\tlomega_\zeta (\beta)$ is independent of the variables ($p(\mu)_{\mu>\nu}, q(\mu)_{\mu\geq \nu}$) with respect to which the differentiation is taken. As a result, all terms involving $\td\tlomega_\zeta (\beta)$ can be eliminated. Under these two considerations, the two expression match. There are two more cases to be verified, but due to the underlying symmetries, their investigation is similar to the above cases and hence are ignored.o

\section{Proof of Theorem \ref{thm:DMF}}\label{sec:proof_DMF}
The proof of Theorem \ref{thm:DMF} is based on straightforward but tedious  bounds. Hence, we do not present the details, but a sketch of the proof. We use Theorem \ref{theorem:main} with $z=0$ and $\sigma=\sigma_m\to 0$ as $m\to\infty$. Moreover, we define for a sufficiently large constant $c'$ the event $\kappa$ as follows:

\begin{enumerate}
    \item The matrices $\tlbX,\bU_\zeta,\bV,\bW_\zeta,\bG_\zeta,\bH_\zeta$ are all bounded by $c'\sqrt{m}$ in operator norm. Moreover, the matrices $\bGamma_\zeta$ are bounded by $c'$ in operator norm.

    \item The variables $\bar\bTheta^TR(\zeta)\bar\bTheta,\ \bar\bOmega_\zeta^T\bar\bOmega_\zeta,\  \nicefrac{\bG_\zeta^TR^{\frac 12}(\zeta)\bar\bTheta}{\sqrt{m}
           },$ $ \nicefrac{\bH_\zeta^T\bOmega_\zeta}{m
           }$, $\hat x^T(\zeta)\bar\theta(l), \nicefrac{\bone^T\bar\omega_\zeta(l)}{m} $ all  deviate less than $c_m$ from their mean value.
\end{enumerate}
Note that since $h$ is valued in $[0\ 1]$
\begin{eqnarray}
    \left|\bbE[h(\xi_\phi)-h(\bar\xi)]\right|\leq\left|\bbE[(h(\xi_\phi)-h(\bar\xi))\chi_\kappa]\right|+\Pr(\kappa^c)
\end{eqnarray}
Now, we make few observations:
\begin{enumerate}
    \item According to Assumption \ref{assum:DMFT} and standard random matrix results, $\Pr(\kappa^c)$ is vanishing with $m$ for a sufficiently large value of $c'$.
    \item Under $\kappa$ and for $z=0,\sigma=\sigma_m$ the value $\|\bar\xi-\bar\xi_\psi\|$ is bounded by a constant vanishing with $m$. To see this, one can invoke the sequential nature of $\bar\xi,\bar\xi_\psi$ and start by bounding $\|q_\psi(0)-\bar q(0)\|,\|p_\psi(0)-\bar p(0)\|$ and then recursively calculate bounds for $\|q_\psi(l)-\bar q(l)\|,\|p_\psi(l)-\bar p(l)\|$. This is  straightforward, but tedious to compute and the exact bound is unimportant for us. Hence, we neglect more details.  
    \item Under $\kappa$ and for $z=0,\sigma=\sigma_m$ the value $\|\xi'_\phi-\xi_\phi\|$ is bounded by a constant vanishing with $m$. To see this, we repeat the procedure in the above and invoke the sequential nature of $\xi'_\phi,\xi_\phi$ and start by bounding $\|q'_\phi(0)- q_\phi(0)\|,\|p'_\phi(0)- p_\phi(0)\|$ and then recursively calculate bounds for $\|q'_\phi(l)- q_\phi(l)\|,\|p'_\phi(l)- p_\phi(l)\|$. Again, this is a straightforward, but tedious computation and the exact bound is unimportant for us. Hence, we neglect more details.  
\end{enumerate}

Accordingly we conclude by the triangle inequality that
\begin{eqnarray}
    &\left|\bbE[h(\xi_\phi)-h(\bar\xi)]\right|\leq\left|\bbE[(h(\xi_\phi)-h(\xi'_\phi))\chi_\kappa]\right|+\left|\bbE[(h(\xi_\psi)-h(\bar\xi))\chi_\kappa]\right|+\Pr(\kappa^c)+\nwl
    &\left|\bbE[(h(\xi'_\phi)-h(\xi_\psi))\chi_\kappa]\right|
\end{eqnarray}
where according to the above observations, the three first terms are bounded by vanishing terms and for the last one we have
\begin{eqnarray}
    \left|\bbE[(h(\xi'_\phi)-h(\xi_\psi))\chi_\kappa]\right|\leq \left|\bbE[(h(\xi'_\phi)-h(\xi_\psi))]\right|+\Pr(\kappa^c)=\Pr(\kappa^c)
\end{eqnarray}
where we invoke Theorem \ref{theorem:main} to eliminate the first term on the right hand side.

\section{Extension of Gordon's Lemma}\label{sec:extension}
Beyond the example discussed in Theorem \ref{theorem:main}, the key challenge of applying Gordon's lemma to the zero points is that it is difficult to confine the Gaussian processes to suitable sets such as $K_U$, where a proper zero point exists.   
In this part, we present a result that resolves this issue by generalizing Gordon's lemma to Gaussian processes that have negligible probability outside of sets such as $K_U$. To express this result, we assume the following:
\begin{assum}\label{assum: f_first}
    We assume an open subset $K$ of $V$ and 
    take a strictly positive, bounded and continuously twice Fr\'echet differentiable function $f:K\to\bbR$ with bounded first and second order derivatives. 
    We take $f=0$ on $K^c$ whenever necessary.
\end{assum}
\begin{defi}
 Next, we define
\begin{eqnarray}
    T_t:=\left\{(\phi,\psi)\in V\times V\middle| \sqrt{t}\phi+\sqrt{1-t}\psi\in K\right\},\quad t\in[0\ 1]
\end{eqnarray}
and for $t\in(0\ 1)$ take  the following vector field on $T_t$:
\begin{eqnarray}
    v(\phi,\psi ,t):=\left(\frac 1{2\sqrt{t}}\bbE_{\phi'}[\phi'D\log f(\rho_t)[\phi']],-\frac 1{2\sqrt{1-t}}\bbE_{\psi'}\left[\psi'D\log f(\rho_t)[\psi']\right]\right)
\end{eqnarray}
where $\rho_t=\sqrt{t}\phi+\sqrt{1-t}\psi$ and the expectations are over independent copies $\phi',\psi'$ of $\phi,\psi$, respectively. Note that these expectations evaluate the covariance operators of $\phi,\psi$ at the dual vector $D\log f(\rho_t)$ and hence exist in the Bochner sense.
\end{defi}

\begin{defi}
    Now, we denote by $S$ the set of all pairs  $(\phi,\psi)$, for which there exists a differentiable path $\gamma_t=\gamma_t (\phi,\psi)\in T_t$  for $t\in [0\ 1]$ satisfying:
\begin{eqnarray}
    \gamma_t(0)=(\phi,\psi),\quad \frac{\td}{\td t}\gamma_t=v(\gamma_t,t),\quad t\in (0\ 1).
\end{eqnarray}
In other words, $S$ is the set of all pairs for which the flow trajectory of $v$ remains in $T_t$.
\end{defi}
   Note that $v$ is continuous and hence, $S$ is open. Moreover for every $t\in [0\ 1]$, the map $\gamma_t(\phi,\psi)$, known as the evolution map  of $v$, is bijective. We denote the image of this map by $S_t=\gamma_t(S)\subseteq T_t$. 
\begin{assum}\label{assum: g_first}
   Finally, we consider a continuously Fr\'echet differentiable function $g:V\times V\to [0\ 1]$ with bounded derivative, supported on $S$.  
\end{assum}   
Then, we have the following:
\begin{theorem}\label{thm: Gordon_extension_1}
    Consider two centered Gaussian measures $\phi,\psi$ on $V$ and suppose that $f,g$ satisfy Assumption \ref{assum: f_first} and \ref{assum: g_first}. Assume that there exists a subset $\calH$ of bounded $(0,2)-$tensors on $V$  and a constant $c$ such that $ D^2f\in \calH$ and for all $H\in\calH$ we have $\bbE[H(\phi,\phi)]\leq \bbE[H(\psi,\psi)]+c$. Then, we have
    \begin{eqnarray}
        \bbE\left[f(\psi)g\circ\gamma_1^{-1}(\phi,\psi)\right]\leq\bbE\left[f(\phi)g(\phi,\psi)\right]+\frac c2.
    \end{eqnarray}
    \begin{proof}
        See section \ref{sec: proof_extension_1}.
    \end{proof}
    
\end{theorem}
Our goal is to replace $g$ with the indicator function of a suitable set, from which $\phi,\psi$ have little probability to escape. In this way, we achieve approximate relations between $\bbE f(\psi)$ and $\bbE f(\phi)$. The next results describes a suitable example, but to express it, we need few more definitions. First, we make the following definition for any linear subspace $W$ of the continuous dual space $V^*$ of $V$:
\begin{eqnarray}
    \lambda(W)=\max\left\{\supl_{u\in W\mid\|u\|_{*}\leq 1}\left\|\bbE[\braket{u,\phi}\phi]\right\|,\supl_{u\in W\mid\|u\|_{*}\leq 1}\left\|\bbE[\braket{u,\psi}\psi]\right\|\right\}, 
\end{eqnarray}
where $\|\ldotp\|_{*}$ denotes the dual norm of $\|\ldotp\|$. We also define 
\begin{eqnarray}
    \tilde T:=\bigcapl_{t\in [0\ 1]}T_t
\end{eqnarray}
Then, we introduce the following notation:
\begin{defi}
    For any subset $K$ of a normed space and any $\epsilon>0$, we denote by $K_\epsilon$ the covering of $K$ by open $\epsilon-$balls. Further, we denote by $K_{-\epsilon}$ the union of all sets $K'$ satisfying $K'_\epsilon\subseteq K$. We take $K_0:=K$.
\end{defi}
Accordingly, we introduce the following definition:
\begin{defi}
    We say that a Borel set $T\subseteq V\times V$ is \emph{expanded} if there exists a family $\{g_n:V\times V\to [0\ 1]\}_n$ of continuously differentiable functions with bounded derivatives, supported on $T$ and converging pointwise to the characteristic function $\chi_T$ of $T$.
\end{defi}
Then, we have the following:
\begin{cor}\label{cor:extended_Gordon}
    Suppose that the conditions of Theorem \ref{thm: Gordon_extension_1} hold true. Moreover, $Df$ belongs to a subspace $W$ of continuous dual vectors and $\|D\log f\|_*< \beta$ for a constant $\beta$. Take an expanded measurable set $T\subseteq \tilde K_{-\lambda(W)\beta}$. Then, we have
    \begin{eqnarray}
        \bbE[f(\psi)\chi_{T}(\phi,\psi)]\leq\bbE[f(\phi)\chi_{T_{\lambda(W)\beta}}(\phi,\psi)]+\frac c2
    \end{eqnarray}
    \begin{proof}
        First, we note that the components of $v(\phi,\psi,t)$ are bounded by $\frac{\beta\lambda}{\sqrt{t}},\frac{\beta\lambda}{\sqrt{1-t}}$, respectively. This shows that $\gamma^{-1}_t(T)\subseteq T_{\lambda\beta}\subseteq \tilde K$ 
        and hence $\gamma^{-1}_1(T)\subseteq S$. This observation allows us to apply Theorem \ref{thm: Gordon_extension_1} to $f, g_n\circ\gamma_1$ and conclude that
        \begin{eqnarray}
        \bbE\left[f(\psi)g_n(\phi,\psi)\right]\leq\bbE\left[f(\phi)g_n\circ\gamma_1(\phi,\psi)\right]+\frac c2
    \end{eqnarray}
    Letting $n\to\infty$ and using dominated convergence theorem leads to:
    \begin{eqnarray}
        \bbE\left[f(\psi)\chi_T(\phi,\psi)\right]\leq\bbE\left[f(\phi)\chi_{\gamma^{-1}_1(T)}(\phi,\psi)\right]+\frac c2
    \end{eqnarray}
    We conclude the result by noting that $\gamma^{-1}_1(T)\subseteq T_{\lambda\beta}$ and $\chi_{\gamma_1^{-1}(T)}\leq\chi_{T_{\lambda\beta}}$.
    \end{proof}
\end{cor}

\subsection{Proof of Theorem \ref{thm: Gordon_extension_1}}
\label{sec: proof_extension_1}
We consider the same interpolation  $\rho_t:=\sqrt{t}\phi+\sqrt{1-t}\psi$ as in Section \ref{sec:proof_gordon}. 
We define $g_t=g\circ\gamma_t^{-1}$ on $S_t$ and $g_t=0$, otherwise. Note that $g_t$ is differentiable everywhere and is supported on $S_t$. We note that $\frac{\partial}{\partial t}(g_t\circ\gamma_t)=\frac{\partial}{\partial t}(g)=0$. Hence,
\begin{eqnarray}
	&0=\frac{\partial g_t}{\partial t}(\phi,\psi)+Dg_t(\phi,\psi)[v(\phi,\psi)]=\nwl
	&\frac{\partial g}{\partial t}(\phi,\psi)+\frac 1{2\sqrt{t}}D_\phi g(\phi,\psi)[\bbE_{\phi'}[\phi' D\log f(\rho_t)[\phi']]]-\frac 1{2\sqrt{1-t}}
	D_\psi g(\phi,\psi)[\bbE_{\psi'}[\psi' D\log f(\rho_t)[\psi']]]=
	\nwl
	&\frac{\partial g}{\partial t}(\phi,\psi)+\frac 1{2\sqrt{t}}\bbE_{\phi'}[D_\phi g(\phi,\psi)[\phi'] D\log f(\rho_t)[\phi']]-\frac 1{2\sqrt{1-t}}
	\bbE_{\psi'}[D_\psi g(\phi,\psi)[\psi'] D\log f(\rho_t)[\psi']]
\end{eqnarray}
Multiplying by $f(\rho_t)$, we may write this relation as
\begin{eqnarray}\label{eq:null_flow}
	&0=f(\rho_t)\frac{\partial g_t}{\partial t}(\phi,\psi)+	
	\nwl
	&\frac 1{2\sqrt{t}}\bbE_{\phi'}[D_\phi g(\phi,\psi)[\phi'] Df(\rho_t)[\phi']]-\frac 1{2\sqrt{1-t}}
	\bbE_{\psi'}[D_\psi g(\phi,\psi)[\psi'] Df(\rho_t)[\psi']]
\end{eqnarray}
Now, we note that
\begin{eqnarray}
    &\bbE\left[f(\psi)g_1(\phi,\psi)\right]-\bbE\left[f(\phi)g_0(\phi,\psi)\right]=\bbE\intl_0^1\frac{\partial}{\partial t}\left[f(\rho_t)g_t(\phi,\psi)\right]\td t=\nwl
    &\intl_0^1\left[\bbE f(\rho_t)\frac{\partial}{\partial t}g_t(\phi,\psi)+\frac 1{2\sqrt{t}}\bbE g_t(\phi,\psi)Df(\rho_t)[\phi]-\frac 1{2\sqrt{1-t}}\bbE g_t(\phi,\psi)Df(\rho_t)[\psi]\right]\td t
\end{eqnarray}
Invoking Stein's lemma for $v(\phi)=g_t(\phi,\psi) Df(\sqrt{t}\phi+\sqrt{1-t}\psi)$ (fixing $t$ and $\psi$) and $v(\psi)=g_t(\phi,\psi) Df(\sqrt{t}\phi+\sqrt{1-t}\psi)$ (fixing $t$ and $\phi$), we obtain:
\begin{eqnarray}
&\bbE[f(\psi)g_1(\phi,\psi)]-\bbE[f(\phi)g_0(\phi,\psi)]=\nwl
&\intl_0^1\bbE\left[\begin{array}{c}
       f(\rho_t)\frac{\partial g_t}{\partial t}(\phi,\psi)+\frac 1{2\sqrt{t}}\bbE_{\phi'}[D_\phi g(\phi,\psi)[\phi'] Df(\rho_t)[\phi']]-\\
        \frac 1{2\sqrt{1-t}}
	\bbE_{\psi'}[D_\psi g(\phi,\psi)[\psi'] Df(\rho_t)[\psi']]
\end{array}\right]\td t+
\nwl
&\frac 12\intl_0^1\bbE\left[g_t(\phi,\psi)D^2f(\rho_t)[\psi',\psi']- g_t(\phi,\psi)D^2f(\rho_t)[\phi',\phi']\right]\td t
\end{eqnarray}
 According to \eqref{eq:null_flow}, we conclude that
 \begin{eqnarray}
&\bbE[f(\psi)g_1(\phi,\psi)]-\bbE[f(\phi)g_0(\phi,\psi)]=\nwl
&\frac 12\intl_0^1\bbE\left[g_t(\phi,\psi)D^2f(\rho_t)[\psi',\psi']- g_t(\phi,\psi)D^2f(\rho_t)[\phi',\phi']\right]\td t
\end{eqnarray}
We arrive at the claim noting $\phi,\psi,\phi',\psi'$ are jointly independent.

\subsection{Application to Analysis of Zero Points}
Our next result shows how Corollary \ref{cor:extended_Gordon} can be used for the analysis of zero points of Gaussian processes.  We follow the discussion in section \ref{sec:gordon_fix} by selecting $T$ to correspond to a suitable subset of functions on $K_{U}$, for which a unique zero point $\hat\xi(\rho)$ of $\rho+\rho_0$ exists and we  take $f(\rho)=h\circ\hat\xi (\rho+\rho_0)$ as a test function $h$ applied to this zero point. For this reason, we introduce  few definitions:
\begin{defi}
    For any open set $U\subseteq \bbR^n$, we denote by $K_{U,\beta}$  the set of all functions $\rho$ on $V$ with a unique zero point $\xi$ on $U$, such that $D^2\rho(\xi)$ is bounded by $\beta$ in operator norm, and $D\rho(\xi)$ is bijective and its inverse is bounded by $\beta$ in operator norm.
\end{defi}
\begin{defi}
    We denote by $T_{U,\epsilon,\beta}$ the set of all pairs of functions $(\phi,\psi)\in V\times V$ such that for all $t\in [0\ 1]$ and $\rho\in V$ with $\|\rho\|<\epsilon$, it holds that $\sqrt{t}\phi+\sqrt{1-t}\psi+\rho+\rho_0\in K_{U,\beta}$. In other words, for all $t\in [0\ 1]$ it holds that $\sqrt{t}\phi+\sqrt{1-t}\psi+\rho_0\in (K_{U,\beta})_{-\epsilon}$.
\end{defi}
Now, we consider two centered Gaussian processes $\phi,\psi$ on $U$, valued in $\bbR^n$. Then, we define the following.

\begin{defi}
    For every point $\xi\in U$, we define the covariance kernel functions $\Phi_\xi,\Psi_\xi: U\to\bbR^{n\times n}$ as follows
    \begin{eqnarray}
        \Phi_\xi(\xi')=\bbE[\phi(\xi')\phi^T(\xi)],\quad
        \Psi_\xi(\xi')=\bbE[\psi(\xi')\psi^T(\xi)].
    \end{eqnarray}
\end{defi}

Finally, we define the following:
\begin{defi}
    We say that $h:\bbR^n\to [-1\ 1]$ is a regular test function if $Dh, D^2h$ are bounded by 1 in operator norm and there exists a point $\xi\in U$, where $h(\xi)=0$.
\end{defi}

Now, we may restate Corollary \ref{cor:extended_Gordon} for zeros of functions:

\begin{cor}\label{cor:main}
 Assume the following:
 \begin{enumerate}
     \item There exists a constant $\lambda$ such that $\Phi_\xi,\Psi_\xi$ and their two first derivatives are bounded by $\lambda$ for every $\xi\in U$. 
     \item There exist a subspace of operators $\calJ$ and constants $c_1,c_2$ such that for any point $\xi\in U$ and any $t\in [0, 1]$, the inverese of 
 \end{enumerate}
 
 the processes satisfy:
\begin{eqnarray}
    &\left\|\bbE[\phi(\xi)\phi^T(\xi)]-\bbE[\psi(\xi)\psi^T(\xi)]\right\|_*< c_1,\nwl
    &\supl_{M\mid \|M\|\leq 1} \left\|\bbE\left[\frac{\partial\phi}{\partial\xi}(\xi)[M\psi(\xi)]\right]-\bbE\left[\frac{\partial\psi}{\partial\xi}(\xi)[M\psi(\xi)]\right]\right\|< c_2,
\end{eqnarray}
where $\|\ldotp\|_*$ is the nuclear norm, i.e. the dual of the matrix operator norm. Take an expanded set $T\subseteq T_{U,\sqrt{2}\lambda\beta,\beta}$. Then, there exists a pair of random vectors $\left(\xi_\phi,\xi_\psi\right)$ which are the zero points of $\phi+\rho_0,\psi+\rho_0$, respectively, with a probability higher than $\Pr\left[(\phi,\psi)\in T\right]$ and the following holds true for every regular test function $h$:
\begin{eqnarray}
    \left|\bbE\left[e^{h\left(\xi_\phi\right)}\right]-\bbE\left[e^{h\left(\xi_\psi\right)}\right]\right|\leq {c_1(\beta^3+\beta^2)+c_2\beta^2}+\Pr\left[(\phi,\psi)\notin T\right]
\end{eqnarray}
\begin{proof}
    We define a pair $\left(\xi_\phi,\xi_\psi\right)$ of random vectors as follows. For $(\phi,\psi)\in T$, define $\xi_\phi, \xi_\psi$ as the unique zero points of $\phi+\rho_0,\psi+\rho_0$, respectively.  Otherwise, take $\xi_\phi=\xi_\psi=\xi_0$ for a constant point $\xi_0\in U^*$, satisfying $h(\xi_0)=0$. Further, we define the function $\hat\xi(\rho)$ for $\rho\in K_{U}$ as the unique zero point of $\rho$ on $U$ and define $f(\rho)=e^{h(\hat\xi(\rho+\rho_0))}$ on $K_{U,\beta}$.
    Now, note that
    \begin{eqnarray}
    &\bbE\left[e^{h\left(\xi_\phi\right)}\right]=\bbE\left[e^{h\left(\xi_\phi\right)}\chi_{T}(\phi,\psi)\right]+\bbE\left[e^{h\left(\xi_\phi\right)}\chi_{T^c}(\phi,\psi)\right]=\nwl
    &\bbE\left[e^{h(\hat\xi(\phi+\rho_0))}\chi_{T_1}(\phi,\psi)\right]+\Pr\left(T_1^c\right)
    \end{eqnarray}
Next, note that $Df(\xi)[\psi]$ is always in the form of $a^T\psi(\xi)$ for some $a\in\bbR^n$ and some evaluation point $\xi$, which allows us to take $W$ as the set of all such operators (of $\psi$). Then, it is straightforward to see that $\lambda(W)\leq\lambda$. On the other hand,  we have $D\log f(\rho)$ is bounded by $\beta$. 
Moreover, according to proposition \ref{prop:deriv} on $K_{U,\beta}$, the constant $c$ in corollary \ref{cor:extended_Gordon} is bounded by $c_1(\beta^2+\beta^3)+c_2\beta^2$. Finally, we note that $T_{\lambda\beta}\subseteq T_{U,0,\beta}=\tilde K_{U,\beta}$.
These observations allow us to invoke corollary \ref{cor:extended_Gordon} to conclude that
\begin{eqnarray}
    &\bbE\left[e^{h\left(\xi_\phi\right)}\right]\leq\bbE\left[e^{h(\hat\xi(\psi))}\chi_{T}(\phi,\psi)\right]+{c_1(\beta^3+\beta^2)+c_2\beta^2}+\Pr\left(T^c\right)
    \leq\nwl
    &\bbE\left[e^{h\left(\xi_\psi\right)}\right]+{c_1(\beta^3+\beta^2)+c_2\beta^2}+\Pr\left(T^c\right)
    \end{eqnarray}
Switching $\phi,\psi$ and repeating the argument gives the result.
\end{proof}

\end{cor}

\end{document}